\documentclass{article}

\usepackage{microtype}
\usepackage{graphicx}
\usepackage{subfigure}
\usepackage{booktabs} 

\DeclareUnicodeCharacter{22A2}{$\vdash\!\!$}
\DeclareUnicodeCharacter{2228}{$\!\!\vee\!\!$}
\DeclareUnicodeCharacter{2227}{$\!\!\wedge\!\!$}
\DeclareUnicodeCharacter{271D}{t}

\usepackage[dvipsnames]{xcolor}
\usepackage[colorlinks=true,allcolors=NavyBlue]{hyperref}
\usepackage{hyperref}

\usepackage[accepted]{icml2026_arxiv}

\usepackage{amsmath}
\usepackage{amssymb}
\usepackage{mathtools}
\usepackage{amsthm}
\usepackage{bbm}

\newcommand{\E}{\mathbb{E}}

\newcommand{\N}{\mathbb{N}}

\newcommand{\R}{\mathbb{R}}
\newcommand{\vr}[1]{\mathbf{#1}}

\renewcommand{\d}{\partial}

\newcommand{\Px}[1]{\left(#1\right)}
\newcommand{\Hx}[1]{\left[#1\right]}

\DeclareMathOperator*{\argmax}{arg\,max}

\DeclareMathOperator*{\Var}{\mathrm{Var}}

\theoremstyle{plain}
\newtheorem{theorem}{Theorem}[section]
\newtheorem{proposition}[theorem]{Proposition}

\theoremstyle{definition}

\theoremstyle{remark}

\usepackage[textsize=tiny]{todonotes}

\icmltitlerunning{Boule or Baguette?}

\begin{document}

\twocolumn[
\icmltitle{Boule or Baguette? 
\\A Study on Task Topology, Length Generalization, 
\\and the Benefit of Reasoning Traces}




\icmlsetsymbol{equal}{*}

\begin{icmlauthorlist}
\icmlauthor{William L. Tong}{seas,kempner}
\icmlauthor{Ege \c{C}akar}{seas,kempner}
\icmlauthor{Cengiz Pehlevan}{seas,kempner,cbs}
\end{icmlauthorlist}

\icmlaffiliation{seas}{School of Engineering and Applied Sciences, Harvard University}
\icmlaffiliation{kempner}{Kempner Institute for the Study of Artificial and Natural Intelligence, Harvard University}
\icmlaffiliation{cbs}{Center for Brain Sciences, Harvard University}

\icmlcorrespondingauthor{William Tong}{wtong@g.harvard.edu}

\icmlkeywords{Machine Learning, ICML}

\vskip 0.3in
]



\printAffiliationsAndNotice{}  

\begin{abstract}
Recent years have witnessed meteoric progress in \textit{reasoning models}: neural networks that generate intermediate \textit{reasoning traces} (RTs) before producing a final output. Despite the rapid advancement, our understanding of how RTs support reasoning, and the limits of this paradigm, remain incomplete. To promote greater clarity, we introduce PITA: a novel large-scale dataset of over 23 million statements in propositional logic and their corresponding proofs. As a benchmark for robust reasoning, we focus on length generalization: if a model is trained to determine truth or falsity on statements with proofs up to fixed length, how well does it generalize to statements requiring longer proofs? We propose notions of (1) \textit{task depth} and (2) \textit{task breadth}, which measure respectively (1) the number of steps required to solve an example from a task and (2) the number of unique examples across a task. We vary these quantities across subsets of PITA, and find that RT models generalize well on \textit{broad} and \textit{shallow} subsets, while deteriorating on \textit{narrow} and \textit{deep} subsets relative to non-RT baselines. To determine whether our results are idiosyncratic to PITA or indicative of general phenomena, we compare our results to a simple synthetic task based on syllogisms. Our resulting theory suggests fundamental scalings that limit how well RT models perform on deep tasks, and highlights their generalization strengths on broad tasks. Our findings overall identify fundamental benefits and limitations inherent in using reasoning traces.
\end{abstract}

\section{Introduction}
In just a few short years, neural networks equipped with reasoning traces (RTs) have attained top performance at international math competitions \cite{gemini_imo_gold,alphageometry2}, threatened to displace software engineers \cite{swe_bench,belzner_llm_assist_swe,gpt_labor_market}, saturated dozens of benchmarks designed to measure human-level reasoning \cite{achiam2023gpt_report,team2023gemini,deveci_ouroboros_benchmarks}, and accomplished among many more astonishing feats big and small \cite{chu_navigate_lab_cot}. And yet, despite the clear empirical success of these methods, debate abounds regarding how and why reasoning traces improve performance. Perhaps RTs allow a model to decompose a difficult problem into simpler steps \cite{wei_cot_og} or allow the model to catch errors and backtrack \cite{guo2025deepseek}. At the same time, others have observed that the content of generated reasoning traces are not always relevant to solving the task \cite{turpin_lm_not_say_think,wang2023towards_understand_what_matters,yee2024faithful_and_unfaithful}, reasoning models are highly susceptible to distribution shifts \cite{zhao2025cot_mirage,stechly2024cot_planning}, and in some cases, less reasoning paradoxically appears to be better \cite{wu_more_is_less}. With growing awareness around the surprising and sometimes dangerous limitations of reasoning models \cite{song2025survey_llm_fails}, combined with the rapid introduction of this technology across mainstream products, it is more important than ever to understand the fundamental benefits and weaknesses of this paradigm.

To this end, we introduce \textbf{PITA}: a large-scale \textbf{p}ropositional \textbf{i}nference \textbf{ta}sk consisting of over 23 million statements and 95 billion tokens in propositional logic\footnote{Our dataset is available on Hugging Face at \url{https://huggingface.co/datasets/williamtong105/pita}.}. As the logical core of mathematics, propositional logic is a representative and tractable microcosm of general reasoning. To ensure rigor and avoid confounds associated with natural language, we formalize our dataset in Lean \cite{de_moura_lean}. For each statement in PITA, a model must determine whether the statement is true or false. The corresponding proof presents a natural and canonical reasoning trace determining the statement's veracity. We compare RT models that are trained using proofs against \textit{direct prediction} (DP) models, which are not provided proofs. To test the robustness of reasoning, we evaluate length generalization: models are trained on statements with proofs up to a fixed length, then tested on statements with longer proofs. We illustrate PITA in Figure \ref{fig:pita}.

In PITA, we define \textit{task depth} and \textit{task breadth}, which relate to the task's topology. Depth measures the number of steps required to prove a statement. Breadth measures the number of unique statements for a given size, where size is determined by the number of propositional atoms. These measures may generalize to any reasoning task, where depth relates to the number of reasoning steps required to solve a task and breadth relates to the size of the task space. 

We divide PITA into several splits with varying depth and breadth, and find that RT models generalize most effectively on \textit{broad} and \textit{shallow} splits (``boule"-shaped), while deteriorating on \textit{narrow} and \textit{deep} splits (``baguette"-shaped) relative to direct prediction models. Given that RTs are presumed to benefit most on tasks with step-wise structure \cite{scratchpad,prystawski_step_by_step}, it is particularly surprising that an RT model \textit{underperforms} the DP baseline on deep splits. 

To determine whether these outcomes are idiosyncratic to PITA, or indicative of a broader phenomenon, we compare our results to a simpler task based on \textit{transitive inference} (TI), the process of reasoning about syllogisms. TI reflects the step-wise reasoning of propositional logic while being more tractable to study, offering a grounded perspective on task topology and reasoning traces. Using arguments based on margin maximization in neural networks \cite{chizat_implicit_bias_of_gd,morwan_feature_learning}, we explain empirically observed scalings that relate task breadth, depth, and model width, revealing both the generalization strengths of RT for broad tasks, and fundamental limitations for deep tasks. These theoretical results suggest fundamental benefits and limitations inherent in using RTs for reasoning tasks with syllogism structure.


\begin{figure*}
    \centering
    \includegraphics[width=\linewidth]{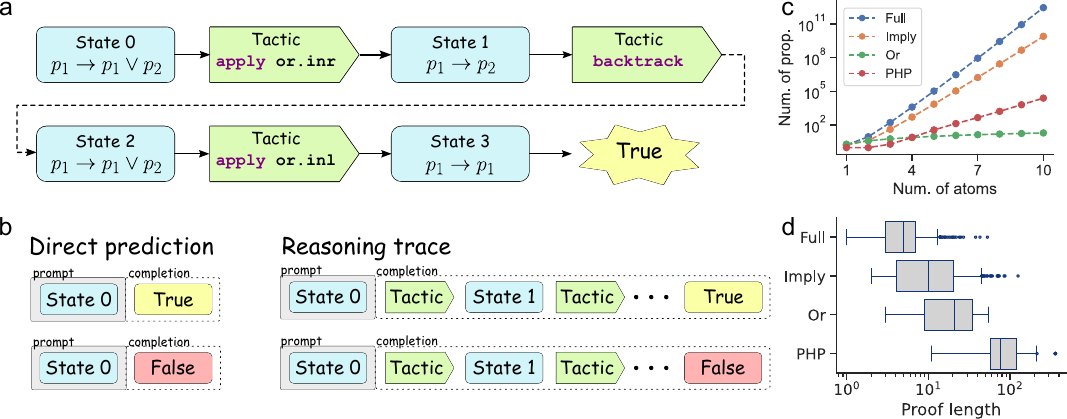}
    \caption{\textbf{The PITA dataset.} \textbf{(a)} Statements and proofs are expressed in Lean. Tactics are Lean commands that transform sets of propositions. A proof is an alternating sequence of tactics and proof states, which terminates in a special token indicating whether the statement is true or false. \textbf{(b)} Inputs are formatted as prompt-completion pairs, where losses are computed only on completion tokens. The DP model's completion consists only of the special classification token. The RT model's completion includes the whole proof. \textbf{(c)} Breadth of each split, plotted as the the total number of unique examples that can be enumerated for a particular statement size (where size is the number of atoms in that statement). \textbf{(d)} Depth of each split, plotted as the number of unique proof states. Boxplots are constructed from 500 samples from each split, and illustrate the median and quartiles. Outliers are determined from 1.5 times the inter-quartile range. For additional details on our splits and how we measure them, see Appendix \ref{app:details}.}
    \label{fig:pita}
\end{figure*}

\subsection{Related work}
Reasoning traces entered mainstream practice with the introduction of chain-of-thought (CoT) prompting \cite{wei_cot_og,scratchpad}. Originally, CoT was an inference-time technique where a language model is prompted to ``think step-by-step," producing a reasoning trace that guides it towards the correct answer. It has since grown to encompass a wide diversity of both inference- and training-time interventions that equip a neural network with some capacity to use or generate reasoning traces \cite{chu_navigate_lab_cot}. In common use, RT and CoT are almost synonymous, but in this study we prefer ``RT" to avoid the connotation of inference-time prompting.

A number of theoretical studies have sought to explain how RTs benefit neural networks. Allowing a Transformer to generate intermediate tokens increases its expressivity, an advantage that can be described precisely using circuit complexity theory \cite{feng_og_revealing,li_serial_probs,merrill_expressive}. Others have observed that reasoning traces confer statistical advantages when learning tasks that would otherwise require a large number of training examples or large model size \cite{abbe_globality_barrier,kim_transformers_parity_cot,hu_stat_found_cot,wies_sub_task_decomp}. Generating RTs may also be interpreted as a form of search, enabling the model to more effectively explore the task space \cite{kim_metastable_cot,gandhi_stream_search,prystawski_step_by_step}.

RTs are also thought to benefit length generalization \cite{huang_prov_cot_len_gen,ahuja_prov_len_and_comp_gen,xiao_theory_len_gen,anil_exp_len_gen_llm}. In general, length generalization is challenging for Transformer models, and depends highly on training procedure and architectural choices like positional encodings \cite{zhou_alg_len_gen,zhou_transformers_len_gen_non_robust,kazemnejad_nope}. Transformers also struggle with very large input contexts \cite{kuratov_babilong,hsieh_ruler,roberts_needle_threading,tay_long_range_arena}, potentially hindering their ability to generalize when tasks require long reasoning traces. Indeed, recent evidence suggests that prolonged reasoning traces can be harmful \cite{wu_more_is_less,hassid_short_chains,ghosal_more_thinking,feng_eff_reasoning_len}. We continue this discussion by connecting length generalization with task topology, clarifying cases where RTs benefit or harm.

Our tasks focus specifically on propositional logic and transitive inference, domains that probe the model's capacity to handle syllogisms. Logical reasoning has been studied across many settings in both natural language \cite{saparov_pronto_qa,tafjord_proofwriter,clark_soft_reason,han_folio} and formal language \cite{pan_logic_lm,xu_sym_cot,pan_3sat,an_propl}. A number of works have compared logical reasoning to graph operations \cite{abbe_globality_barrier,minegishi_topology,sanford_graphs,wang_comp_fun,prystawski_step_by_step}. Related to our work, \citet{abbe_globality_barrier} formalize syllogism as an in-context graph connectivity task, finding that CoT interventions may enable greater learning efficiency. \citet{minegishi_topology} interpret the reasoning process as graph traversal. Using clustering techniques to extract graphs from hidden representations, they identify correlations between topological properties of the graph and the models' performance. We continue this analogy between logical reasoning and graph operations, with a focus on properties of the task-induced graph that support or hinder length generalization.

Transitive inference (TI) is a specific case of graph-based approaches to reasoning. A classic cognitive task in humans and animals \cite{piaget_ti,apa}, TI presents a set of stimuli together with an ordering relation. During training, the subject observes pairwise relations between stimuli. At test time, the subject must transitively infer the correct relation between unobserved pairs. TI is a well established test for syllogism reasoning in psychology \cite{vasconcelos_ti,apa}, and witnessing increasing application to neural networks \cite{lippl_ti,kay_ti,stachenfeld_ti}. We build on this tradition by employing TI as a tractable yet phenomenologically rich setting for measuring reasoning in complex models.

\section{PITA: a \underline{p}ropositional \underline{i}nference \underline{ta}sk}
We introduce PITA: a large-scale synthetic dataset of propositional logic, formalized in Lean. To the best of our knowledge, PITA is the largest dataset of Lean theorems to date, with over 23 million statements and 95 billion tokens. The data-generating procedure for PITA is unbounded, allowing for as many additional statements as compute permits. We leverage PITA to evaluate logical reasoning in LLMs at scale, uncovering surprising generalization differences between RT models and direct prediction baselines.

Figure \ref{fig:pita} illustrates the format of PITA. The dataset consists of four splits based on syntactic families of propositions, described in more detail below. To construct an example, we enumerate or sample a statement under the appropriate syntactic constraints, then construct a proof using a \textit{focused proof search} procedure \cite{liang2009focused_proof_search,an_propl}. The proof is then translated into Lean code, wrapped in our XML template, and tokenized. For a brief primer on propositional logic and proof systems, see Appendix \ref{app:prop_logic}.

\subsection{PITA splits}
We organize PITA into four dataset splits, with varying \textit{topology} determined by the underlying reasoning process. The two metrics we use to describe topology are \textit{task depth} and \textit{task breadth}. Depth measures the number of steps required to prove a statement in a split, counted as the number of unique proof states between the statement and final classification. This count includes proofs states added by backtrack demonstrations. Doing so more accurately reflects the real volume of computation required to prove a statement, as compared to an idealized measure based on the shortest path (which is unknown at the start). Breadth measures the number of unique statements for a given size, where size is counted by the number of atoms in the statement. For example, consider the \textsc{Imply} split, which consists of statements that have only the \textit{implication} ($\rightarrow$) connective. For a size of 2 atoms, the breadth of \textsc{Imply} is 6: $p \rightarrow q$, $p \rightarrow p$, $\top \rightarrow \top$, $\top \rightarrow \bot$, $\bot \rightarrow \top$, and $\bot \rightarrow \bot$. \footnote{When counting statements, we do not count differences in variable names. Hence, $p \rightarrow q$ and $q \rightarrow p$ and $p' \rightarrow q'$ are all the same, but $p \rightarrow p$ is distinct. This choice reflects an invariance of propositional logic: a statement's truth value cannot be changed by changing variable names. Further, including variable names would allow the breadth of any split to be artificially modified by introducing or removing names, reducing its descriptive utility for differentiating model performance. Avoiding variable names contributes to a consistent baseline for measuring breadth across all splits.

For similar reasons, with commutative and associative connectives ($\vee$ and $\wedge$), we ignore differences in permutation and association. Hence, $p \vee (q \vee r)$ is the same as $(q \vee p) \vee r$.}

Depth and breadth correspond to natural descriptions of topology in general problem-solving tasks. Depth relates to the number of steps required to solve a task. Breadth relates to the size of the task space. These are general and intuitive properties of any task with step-wise structure, and historically studied in great depth for classic search and planning algorithms in AI \cite{russell2016ai}. We anticipate that our intuitions remain valid in many tasks beyond PITA.

We investigate how task topology influences length generalization in reasoning trace models, compared against direct prediction baselines. We construct four dataset splits delineating different subsets of propositional logic, with varying depth and breadth:

\begin{itemize}
    \item \textbf{\textsc{Full}}. The full space of propositional statements with no restrictions. This split is the broadest.
    \item \textbf{\textsc{Imply}}. The implicational fragment of propositional logic: only implication-connectives are permitted. Implication is functionally complete \cite{enderton2001logic}: every proposition in \textsc{Full} can also be expressed in \textsc{Imply}. Together with \textsc{Full}, these splits are the broadest.
    \item \textbf{\textsc{Or}}. Statements that have a single \textsc{imply} connective, a single variable in the antecedent, and only or-connectives in the consequent. Propositions of this form are equivalent to a membership check: does the variable in the antecedent occur in the consequent? This split is the narrowest.
    \item \textbf{\textsc{PHP}}. The Pigeonhole Principle (PHP) states that given $m$ pigeons and $n$ holes, if $m > n$, then at least one hole contains two or more pigeons (see Eq \eqref{eq:php} for the exact statement in propositional logic). We use a harder variation in which pigeons are constrained to occupy subsets of available holes. These propositions require proofs that can be exponentially long in the number of pigeons and holes \cite{razborov2001php}, so they are often employed to stress-test proof systems \cite{haken1985php_hard,balsiger2000php_benchmarks}. This split is the deepest. Together with \textsc{Or}, these splits are the narrowest.
\end{itemize}

We plot the depths and breadths of each split in Figure \ref{fig:pita}. \textsc{Full} and \textsc{Imply} are \textit{boule-shaped} splits, significantly broader and generally shallower than the other two. \textsc{Or} and \textsc{PHP} are \textit{baguette-shaped} splits, substantially narrower than the first two, with \textsc{PHP} notably deeper than any other split.

A task becomes more difficult as it deepens and broadens. Our boule-shaped splits highlight the impact of breadth, testing how models handle an increasing diversity of examples. Our baguette-shaped splits highlight the impact depth, testing how models handle an increasing number of reasoning steps. See Appendix \ref{app:details} for example propositions in each split, along with additional details on their construction.

\begin{figure*}
    \centering
    \includegraphics[width=0.9\linewidth]{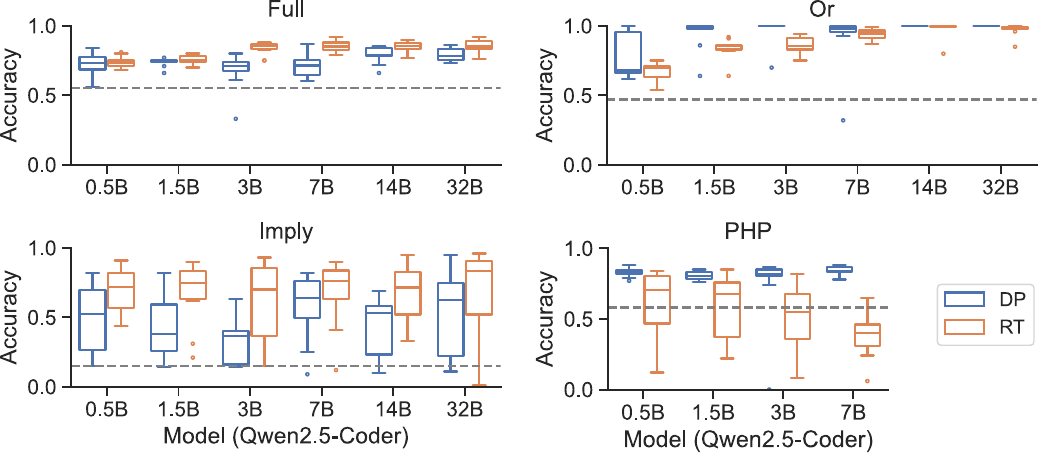}
    \caption{\textbf{Generalization accuracy on PITA splits.} Models are trained up to median proof length of their respective splits, then evaluated on longer examples. The dashed line marks chance level performance, where chance is calculated as the test accuracy attained through random guessing with probability equal to the proportion of true/false examples in the training distribution. Because true/false proportion may vary widely between train and test distributions, chance level is sometimes substantially below 50 percent, as in \textsc{Imply}. RT models typically outperform DP models on breadth-dominated splits (\textsc{Full} and \textsc{Imply}), while the reverse is true for depth-dominated splits (\textsc{Or} and \textsc{PHP}). Due to the computational constraints, the largest trainable model is 7B for \textsc{PHP} and 32B for all others. Boxplots are constructed from 10 runs, where each model is evaluated on 100 test samples. Box lines illustrate the median and quartiles. Outliers are determined from 1.5 times the inter-quartile range.}
    \label{fig:pita_res}
\end{figure*}

\subsection{Model}
We test Qwen2.5-Coder, Gemma 3, and Llama 3 on PITA, across many sizes. We start with pretrained models, and finetune on PITA.

Input formats for our RT and DP models are illustrated in Figure \ref{fig:pita}. For each statement in PITA, the model determines whether the statement is true or false. Each statement also has a corresponding proof establishing its veracity or falsity. RT models are trained on statements with proofs before rendering their final classification. DP models are trained only on statements, and output a classification directly without proof. In this way, we employ proofs as a natural, canonical reasoning trace, reflecting the role of proofs in math to establish the veracity of theorems. Our setting is similar to cases where models are trained directly on datasets that embed reasoning traces, where RTs are hand-constructed \cite{ye2025limo}, generated by large reasoning models \cite{guha2025openthoughts}, or (as in our case) constructed from a verifiable source \cite{yang2023leandojo}. All models are finetuned on next-token prediction with cross-entropy loss, using QLoRA \cite{dettmers2023qlora}. All hyperparameter configurations and implementation details are listed in Appendix \ref{app:details}.

We evaluate models by their ability for length generalization: models are trained on statements with proofs up to the median proof length of the split, and evaluated on statements with longer proofs. During evaluation, the model generates tokens autoregressively until producing a special classification token indicating whether the statement is true or false. By virtue of their training, DP models almost always produce a classification token immediately, whereas RT models generate an intermediate proof first. Since our focus is on studying the impact of reasoning traces, rather than the model's ability to generate correct proofs, we ignore the content of the reasoning trace and measure accuracy only from the classification token\footnote{An informal inspection of the generated reasoning traces indicates that models generally produce plausible proofs that are very similar to the ground truth. We offer a few examples in Appendix \ref{app:ex_out}, and defer a more thorough examination of LLM-generated proofs for future work.}. If a model fails to produce a classification token within the max context length, the test point is marked as an error.

\subsection{Experimental results}
Figure \ref{fig:pita_res} plots the generalization accuracy of RT and DP models across the Qwen 2.5 Coder family. Additional plots for Gemma 3 and Llama 3 are provided in Appendix \ref{app:additional_fig}. Reasoning trace models tend to outperform their direct prediction counterparts on the breadth-dominated \textit{boule-shaped} splits \textsc{Full} and \textsc{Imply}, particularly for larger models. The reverse is true for the depth-dominated \textit{baguette-shaped} splits, where DP models outperform RT models by as much as 50 percentage points on \textsc{PHP} for the 7B model. The same trends hold consistently across all other model families.

These results indicate a prevalent reversal in the generalization gap between RT and DP models on breadth-dominated tasks versus depth-dominated tasks. One intuitive explanation for this outcome is that long reasoning traces sometimes hurt model performance. Because deeper tasks require more steps to solve, they also require a longer reasoning trace. Recent findings suggest that long RTs incur a number of significant issues: long generation may introduce cascading errors \cite{bengio_scheduled_sampling}, autoregressive prompting is susceptible to an exposure bias \cite{schmidt_generalization}, Transformers have difficulty processing long contexts \cite{kuratov_babilong}, and empirical performance sometimes correlates \textit{negatively} with reasoning trace length \cite{wu_more_is_less}. This outcome is consistent with the emerging narrative that long RTs hurt performance. Indeed, a detailed error breakdown of an example RT model (Figure \ref{fig:app_breakdown}) suggests that failure to terminate accounts for a large proportion of RT errors.

However, this explanation glosses over the many arguments suggesting that RTs are helpful precisely for tasks with salient step-wise structure \cite{scratchpad}. Indeed, Figure \ref{fig:app_breakdown} suggests RT models commit fewer false positive and false negative errors generally. In these settings, direct prediction models are shown to either learn a short-cut solution that generalizes poorly \cite{liu2022transformers_shortcuts_automata}, or fail to learn a solution at all within a reasonable training budget \cite{feng_og_revealing,abbe_globality_barrier}. This literature would suggest RT models should always outperform DP models on tasks like PITA, regardless of task topology. Perhaps RT model performance would fall on deep splits due to the issue listed above, but should not DP model performance fall even further?

It remains unclear how this tradeoff operates between and long-trace failures and generalization strengths, and whether these results are idiosyncratic to PITA or indicative of broader phenomena. To provide a point of comparison, we next consider a simple task that models the essential skill underlying propositional logic --- \textit{transitive inference} --- and reproduces these trends in a minimal, tractable setting. Our theoretical analysis offers additional insight on fundamental tradeoffs inherent to RT methods.

\section{Transitive inference} \label{sec:ti}
Transitive inference (TI) has as long, rich history in developmental and comparative psychology, where it is frequently employed to describe the ability to reason with syllogisms \cite{piaget_ti}. Given the facts ``A implies B" and ``B implies C," a subject may transitively infer that ``A implies C." TI is also witnessing increasing application in neural networks to measure reasoning \cite{lippl_ti,kay_ti,stachenfeld_ti}.

Syllogisms are a fundamental component of propositional logic, and indeed of logical reasoning generally. TI presents a salient and phenomenologically rich foundation to more deeply explore logical reasoning in neural networks. In the remainder of this paper, we present an analysis of TI in Transformers that connects task topology to generalization tradeoffs between RT and DP models, offering a grounded perspective of our results on PITA.

\subsection{Task}
Our TI task consists of a set of symbols $S = \{s_i\}$ and a set of transitive relations $R = \{s_i \rightarrow s_j\}$. Transitivity requires that if $s_i \rightarrow s_j \in R$ and $s_j \rightarrow s_k \in R$, then $s_i \rightarrow s_k \in R$. The model is then presented pairs $s_i, s_j$ with label
\[
y = \begin{cases}
    1 &\quad \text{if} \; s_i \rightarrow s_j \in R \\
    -1 &\quad \text{otherwise} \,.
\end{cases}
\]
A relation $s_i \rightarrow s_k$ is \textit{derived} if there exists one or more intermediate symbols $j_1, j_2, \ldots$ such that $s_i \rightarrow s_{j_1}; \, s_{j_1} \rightarrow s_{j_2}; \, s_{j_2} \rightarrow \ldots \rightarrow s_k$. If a relation is not derived, it is \textit{direct}.

To construct a simple analog of task depth and breadth, we consider the following structure of transitive relations. Symbols are organized into $B$ distinct branches, each consisting of $D$ symbols. We denote the $n$th branch as a set $\mathcal{B}^n = \{ s_1^n, s_2^n, \ldots, s_D^n\}$ with direct relations $s_1^n \rightarrow s_2^n; \, s_2^n \rightarrow s_3^n \, \ldots \, s_{D-1}^n \rightarrow s_D^n$. There do not exist relations between the symbols of two different branches. 

To determine whether $s_i^m \rightarrow s_j^n$ is a valid syllogism, we need to enumerate the intermediate relations $s_i^m \rightarrow s_{i+1}^m \rightarrow \ldots \rightarrow s_{D}^m$. If $s_j^n \in \{s_{i+1}^m \ldots s_D^m\}$, then the relation is valid. Performing this check requires up to $D$ reasoning steps, so $D$ naturally parameterizes the task depth. For a fixed depth, the number of branches $B$ determines the number of possible symbol pairs, and therefore the size of the task space. Hence, $B$ parameterizes the task breadth.

We evaluate length generalization in the following way. The position along a branch $n$ may be defined by the number of direct relations leading from the first symbols $s_1^n$ (which conveniently aligns with our indexing). We denote the distance between two symbols as $d(s_i^n, s_j^n) = |j - i|$. Since each branch has the same number $D$ of symbols, we also define distance between symbols of different branches as $d(s_i^n, s_j^m) = |j - i|$. During training, we present symbol pairs up to a fixed distance $k$. During testing, we measure generalization accuracy on symbol pairs with distance greater than $k$.

Examples are sampled uniformly at random and presented to the model online. Exactly half of the examples have positive labels, and half have negative. Given a candidate relation $s_i^m \rightarrow s_j^n$, a negative example may be constructed either by ensuring $m \neq n$ or $i > j$. We concentrate on negative examples where branches differ, finding that this setting is already sufficient to reproduce the generalization characteristics observed in PITA, and defer a more detailed analysis of $i > j$ inverted negative examples for future work. We illustrate our task in Figure \ref{fig:ti}.

\begin{figure*}
    \centering
    \includegraphics[width=\linewidth]{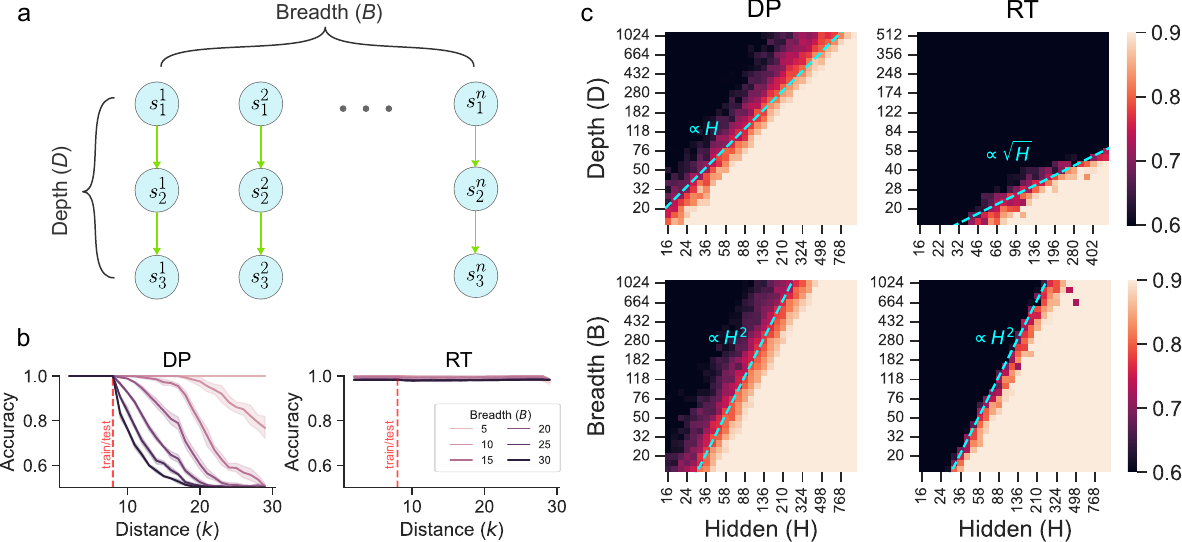}
    \caption{\textbf{Transitive inference task.} \textbf{(a)} Illustration of the TI task. Symbols are arranged in a series of parallel branches, each consisting of a line of inferences. Breadth is parameterized by the number of branches $B$, while depth is parameterized by the number of symbols in a branch $D$. \textbf{(b)} Generalization accuracy for fixed depth $D = 30$ and varying breadth. The red dashed line indicates the max training length. Generalization accuracy for the DP model decays quickly with breadth, while remaining consistently high for the RT model. Shaded error regions correspond to 95 percent confidence intervals estimated from six seeds. \textbf{(c)} Heatmaps of training accuracy for varying depth, breadth, and model size for the full model described in Section \ref{sec:model}. Scalings are plotted in cyan, and agree closely with the high accuracy contours.}
    \label{fig:ti}
\end{figure*}

\subsection{Model} \label{sec:model}
We study single-layer, single-head, decoder-only Transformer models. Input symbols $s_1^n, s_2^n, \ldots$ are mapped to embeddings $\vr{x}_1^n, \vr{x}_2^n, \ldots \in \R^H$, sampled as $\vr{x}_i^n \sim \mathcal{N}(\vr{0}, \vr{I}_H)$ and fixed throughout training.

Embeddings pass through a self-attention layer. For an input sequence consisting of embeddings $\vr{x}_1, \vr{x}_2, \ldots, \vr{x}_L$, the $L$th activation $\vr{v}_L$ is given by
\[\vr{v}_L = \vr{V} \left(a_{1L} \,\vr{x}_1  + a_{2L} \, \vr{x}_2 +  \ldots + a_{LL}\, \vr{x}_L \right)\,, \]
where attention weights are given by
\[a_{iL} = \text{softmax} \left(\frac{1}{H} \vr{x}_i^\intercal \vr{K}^\intercal \vr{Q}\, \vr{x}_L \right) \,.\]
As usual, the softmax is computed across all weights $a_{(\cdot) L}$ such that $\sum_i a_{iL} = 1$. Note, we use the $\mu$P scaling prefactor $1 / H$ rather than the original $1 / \sqrt{H}$ \cite{yang2022mup_hyperparam,bordelon2024infinite_head}. Weight matrices $\vr{K}, \vr{Q}, \vr{V} \in \R^{H \times H}$ are initialized in $\mu$P.

Activations then pass through an MLP. The $L$th logit $y_L$ is given by
\begin{equation} \label{eq:y_out_logit}
y_L = \vr{w}_r^\intercal \phi (\vr{W}_1 \vr{v}_L) \,.
\end{equation}
Weight matrices $\vr{W}_1 \in \R^{H\times H}, \vr{w}_r \in \R^H$ are again initialized in $\mu$P.

Our essential phenomenology is captured by this minimal model. For simplicity, we omit residual connections, layer normalization, dropout, and biases.  We also omit positional encodings. Consistent with \cite{kazemnejad_nope}, we find that model performance remains the same or \textit{improves} when positional encodings are omitted.

\subsubsection{Input format}
For the \textbf{direct prediction} model, input pairs are provided directly and a classification is immediately required. Given embeddings $\vr{x}_i^m$ and $\vr{x}_j^n$, the input is formatted as
\[\begin{pmatrix} \vr{x}_i^m & \vr{x}_j^n \end{pmatrix}\,,\]
producing an input in $\R^{H \times 2}$ \,. Using the output logit $y_2$, the model is trained using gradient descent on binary cross entropy loss.

For the \textbf{reasoning trace} model, inputs $\vr{x}_i^m, \, \vr{x}_j^n$ are followed by a reasoning trace that enumerates \textit{all} symbols in the $n$th branch. Once the enumeration completes, the model generates a special classification token, denoted here as ``\checkmark" if the relation is valid and ``$\times$" if the relation is invalid. For example,
\[ \begin{pmatrix}
    \vr{x}_i^m & \vr{x}_j^n & \vr{x}_{1}^n & \cdots & \vr{x}_{j-1}^n & \vr{x}_{j+1}^n \cdots \vr{x}_{D}^n & \checkmark 
\end{pmatrix} \,.\]
This input produces an embedding in $\R^{H \times (D+2)}$. There are other reasonable input schemes, like terminating early if $\vr{x}_i^m$ is recovered in the reasoning trace. The format we choose simplifies our theoretical analysis, and also performs very well for the RT model in practice (Figure \ref{fig:app_variation}). The model is trained on autoregressive next-token prediction using gradient descent on cross entropy loss. All hyperparameters and setup details are listed in Appendix \ref{app:details}.

\subsection{TI results}
We identify empirical scalings that relate generalization accuracy with task topology. We also characterize scalings between topology and model width required to maintain high training accuracy. These scalings are summarized in Table \ref{tab:ti_scales}, and plotted in Figure \ref{fig:ti}. 

To explore where these scalings originate, we conduct a theoretical analysis based on max-margin considerations. Neural networks trained on a classification objective often converge to a set of weights that maximize the margin over their training data \cite{soudry_implicit_bias_of_gd,wei_regularization}. We characterize the max margin solution for our task, in the process identifying constraints that model weights must satisfy to solve transitive inference. These constraints limit the largest task depth $D$ and breadth $B$ that a model can learn, as a function of its hidden width $H$. We also use these constraints to infer the models' generalization properties. Our analysis reproduces our empirical scalings, and is detailed in Appendix \ref{app:ti_theory}. 

\begin{table}[t]
  \caption{\textbf{Empirical scalings for the transitive inference task}.
  We observe that generalization accuracy $\alpha_g$ decays proportional to $B^{-2}$ for DP, but remains flat for RT (see also Figure \ref{fig:app_dp_gen_acc}). The depth and breadth scalings characterize the high training accuracy contour in depth- and breadth-dominated settings. For example, the scaling $B \propto H^2$ can be interpreted as: if model width $H$ doubles, then the model maintains high accuracy on tasks up to 4x broader. Our theoretical analysis suggests the origin of these scalings, and is detailed in Appendix \ref{app:ti_theory}.}
  \label{tab:ti_scales}
  \begin{center}
    \begin{small}
      \begin{sc}
        \begin{tabular}{cccc}
          \toprule
          Prompt      & Gen. acc.  & Depth         & Breadth    \\
          \midrule
          DP    & $\alpha_g \propto B^{-2}$  & $D \propto H$        & $B \propto H^2$ \\
          RT   & $\alpha_g \propto 1$ & $D \propto \sqrt{H}$ & $B \propto H^2$ \\
          \bottomrule
        \end{tabular}
      \end{sc}
    \end{small}
  \end{center}
  \vskip -0.1in
\end{table}

These results echo our findings on PITA. Reasoning trace models perform better on breadth-dominated \textit{boule-shaped} tasks: as breadth increases, generalization in the RT model remains unchanged while generalization accuracy in the DP model decays. On the other hand, direct prediction models perform better on depth-dominated \textit{baguette-shaped} tasks: the depth necessary to maintain high accuracy scales faster with hidden width in DP models compared to RT models.

Our theoretical analysis suggests that the generalization differences between DP and RT models may be understood through a difference in inductive bias. By enumerating over intermediate symbols, the RT model benefits from an inductive bias for a solution that transfers directly from train to test examples (Proposition \ref{prop:cot_test_to_train}). As a result, when training accuracy is high, generalization is perfect.

Meanwhile, generalization performance decays in the DP model as breadth increases. We find that the DP model overfits to the training distribution (Section \ref{app:generalization}), an effect exacerbated by increasing $B$. This outcome resonates with the current understanding that Transformers tend to discover ``short-cut" solutions that approximate correct outputs with brittle heuristics \cite{liu2022transformers_shortcuts_automata}. Although both DP and RT models attain a fast $B \propto H^2$ scaling in training accuracy, the superior generalization performance of reasoning trace models for large breadth differentiates their strength in a breadth-dominated setting.

When increasing task depth, the DP scaling $D \propto H$ is notably faster than the RT scaling $D \propto \sqrt{H}$. We ultimately trace the slower RT scaling to an issue with processing long-contexts: as depth increases, the reasoning trace also lengthens, making it more difficult to discern the correct label (Proposition \ref{prop:stat_indist_ar_length}). If breadth is small, the DP model may attain high generalization accuracy. Thus, direct prediction outperform reasoning traces on baguette-shaped tasks that are deep and narrow.

\paragraph{Implications for PITA.} Our transitive inference task is substantially simpler than PITA, and different mechanisms may drive the topology/RT relationships we observe in these two settings. Nonetheless, the TI task indicates that our observations generalize to settings beyond PITA, and common intuitions may transfer. Reasoning traces may narrow the gap between train and test propositions, improving generalization. Direct prediction may be prone to spurious signals in the training set, particular when the set of propositions broadens and becomes more complex, reducing generalization performance on breadth-dominated splits. Deep PITA splits necessitate longer proofs, introducing long-context issues that likely handicap generalization in the RT model. Our analysis with TI highlights the particular issue that long contexts are statistically more difficult to process when the relevant signal is weakly dispersed in the total input. On the other hand, DP models benefit from narrower tasks and avoid the pitfalls of long contexts, preserving their length generalization performance on baguette-shaped splits.

\section{Discussion}
We introduced PITA, a large-scale dataset of statements in propositional logic. We defined two metrics that describe task topology: (1) depth, the number of steps require to solve a task, and (2) breadth, the number of unique examples across a task. Across several model families and sizes, we identified a persistent reversal in length generalization performance between reasoning trace and direct prediction models, where RT models performed well on \textit{boule-shaped} tasks, and DP models excelled on \textit{baguette-shaped} tasks.

To explore these trends further, we proposed a transitive inference task that exhibits the same phenomena. Our theoretical analysis suggests that DP models discover suboptimal minima that generalize poorly for increasing task breadth. Reasoning traces narrow the generalization gap, supporting high accuracy for shallow tasks as breadth increases. However, long context hinder RT performance on deep tasks, pulling it below DP models when the task remains narrow.

The surprising performance of DP models indicates a need for more nuance when using RTs. Past work have observed that reasoning traces are minimally beneficial (and occasionally harmful) in domains like text classification and commonsense reasoning \cite{sprague2024_2cot_or_not_2cot,liu2024mind_your_step}. Even in propositional logic, a setting that should complement the advantages of reasoning traces, we identify a consistent generalization difference between RT and DP models related to task topology. Our notions of task depth and breadth are general, and we anticipate future work characterizing topology more precisely and across a wider range of settings to predict where direct prediction may excel.

In many domains, the task is both very \textit{broad} and very \textit{deep}. For these especially challenging reasoning tasks, it is likely more feasible to improve long-context processing in an RT model than to endow a DP model with magically better generalization. Our results support the growing realization that Transformers perform poorly with long-contexts, and alternative methods are needed. Besides issues with autoregressive generation, our theory on TI additionally highlights the statistical difficulty in processing a long context where the signal is weakly dispersed across many relevant tokens. As reasoning trace techniques grow increasingly popular, we highlight long-context reasoning as an important obstacle hindering current Transformer-based paradigms.

With the general increasing interest in formal reasoning, we add PITA to the growing collection of testbeds for promoting and evaluating logical reasoning in neural networks. As a large-scale, extensible framework for generating Lean-formalized statements in propositional logic, PITA presents many additional opportunities for rigorously testing and developing new models. We look forward to many more frumentaceous findings to come.


\section*{Acknowledgements}

We thank Mike Douglas, Mike Freedman, and members of the Pehlevan Group for many helpful discussions during this project. WLT is supported by a Kempner Graduate Fellowship. CP is supported by an NSF CAREER Award (IIS-2239780), DARPA grants DIAL-FP-038 and AIQ-HR00112520041, the Simons Collaboration on the Physics of Learning and
Neural Computation, and the William F. Milton Fund from Harvard University. This work has been made possible in part by a gift from the Chan Zuckerberg Initiative Foundation to establish the Kempner Institute for the Study of Natural and Artificial Intelligence. The computations in this paper were run on the FASRC cluster supported by the FAS Division of Science Research Computing Group at Harvard  University.

\bibliography{ref}
\bibliographystyle{icml2026}

\newpage
\appendix
\onecolumn

\section{Primer on propositional logic and proof systems} \label{app:prop_logic}


Propositional logic is a calculus of truth. Given propositions with intrinsic truth value and connectives that join them, propositional logic studies the ultimate truth of their combination. This appendix offers a brief primer to provide some background for our PITA dataset. This topic is deep and nuanced; for a more in-depth treatment of this topic, we recommend \citet{enderton2001logic}.

A \textit{statement} in propositional logic (or simply, a \textit{proposition}) consists of \textit{atoms} and \textit{connectives}. An \textit{atom} is a symbol that denotes truth ($\top$), falsity ($\bot$), or a variable (e.g. $p, q, r, \ldots$). A connective is a function that takes two propositions and returns a corresponding truth assignment. For instance, the connective \textit{and} ($\wedge$) evaluates to true only if both inputs are true. Hence, $\top \wedge \top$ is true but $\top \wedge \bot$ is false. Other common connectives are \textit{or} ($\vee$) and \textit{imply} ($\rightarrow$). The truth assignment of each connective we use in PITA is summarized below in Table \ref{tab:truth_assignments}

\begin{table}[h]
  \caption{\textbf{Truth assignments for logical connectives in PITA}. For propositional variables $p$ and $q$ with the indicated truth assignment, we record the output of each connective.}
  \label{tab:truth_assignments}
  \begin{center}
    \begin{small}
      \begin{sc}
        \begin{tabular}{ccccc}
          \toprule
          $p$  & $q$ & $p \wedge q$  & $p \vee q$ & $p \rightarrow q$  \\
          \midrule
          T    & T   &  T            & T          & T  \\
          T    & F   &  F            & T          & F  \\
          F    & T   &  F            & T          & T  \\
          F    & F   &  F            & F          & T  \\
          \bottomrule
        \end{tabular}
      \end{sc}
    \end{small}
  \end{center}
  \vskip -0.1in
\end{table}

We say that a proposition is true if it is a \textit{tautology}: the proposition remains true for every possible truth/falsity assignment of its variables. Otherwise if the proposition is not a tautology, then we say the proposition is false. To demonstrate whether a propositional statement is true, one may simply enumerate over all possible truth assignments and verify that the statement remains true. However, this approach requires $2^n$ evaluations for a statement with $n$ variables, which becomes impractical for large propositions. To demonstrate truth more efficiently, we may instead construct a \textit{proof}.

\subsection{Proofs and proof search}
A \textit{proof} is a sequence of transformations that converts a proposition into other (usually simpler) propositions while preserving tautology. These transformations are called \textit{inference rules}. For example, a common inference rule is \textit{modus ponens} (MP), which requires two propositions as input: a proposition $p$, and a proposition $p \rightarrow q$. MP then allows us to transform these two propositions into the single proposition $q$.

There are a variety of valid inference rules beyond MP, and different proof systems permit different sets of inference rules. In PITA, we use rules of \textit{natural deduction}, which are a common choice with desirable properties for both reasoning about proofs and engineering automated proof search procedures. For a more detailed discussion of natural deduction and examples of its inference rules, see \citet{pelletier2021natural_deduction}.

Armed with inference rules, the process of constructing a proof can be thought of as the process of identifying the correct sequence of rules that translates the starting proposition into the proposition $\top$. Under this interpretation, inference rules induce a graph-like topology: sets of propositions are nodes, and inference rules are directed edges that connect them. If a path exists from a proposition $p$ to the the proposition $\top$, then $p$ must be true. Otherwise if no such path exists, $p$ must be false\footnote{Since inference rules may introduce additional propositions that need to be concurrently proved, proofs technically constitute \textit{trees} through this graph rather than paths. A proof is true if all leaves of this tree terminate in $\top$. Otherwise, the proof is false. In this discussion, we have also ignored nuances related to syntactic truth versus semantic truth, and falsity versus unprovability. These issues are important but not immediately relevant in our setting. For additional discussion on these topics and more, we recommend \citet{enderton2001logic}}.

In this way, proving a proposition reduces to implementing a search procedure over the graph of propositional statements and inference rules. Common approaches to performing this search procedure are sensitive to symmetries inherent to the different logical connectives, and apply inference rules in deliberate order to control the size of the search space. To construct the proofs used in PITA, we employ an algorithm from the family of methods called \textit{focused proof search} \cite{liang2009focused_proof_search}, a popular choice for constructing efficient and intuitive proofs to arbitrary propositional statements.

\section{Theoretical analysis of transitive inference} \label{app:ti_theory}

In this appendix, we detail our analysis of a Transformer learning the transitive inference task, introduced in Section \ref{sec:ti}. Our analysis is organized in three stages.
\begin{enumerate}
    \item We begin by simplifying our Transformer model further. We argue that attention becomes uniform over inputs for our task, reducing our model to a two-layer MLP. Building on \cite{tong_learning_richness}, we introduce a \textit{sample-coordinate representation} of our model weights, in which we characterize our weights in the span of training samples. 
    \item We identify the set of weights that maximize the classification margin in our transitive inference task. Because neural networks frequently discover a max-margin solution \cite{soudry_implicit_bias_of_gd,wei_regularization,chizat_implicit_bias_of_gd,morwan_feature_learning}, we use the max-margin weights to describe a normative solution that the model implements to solve this task.
    \item Under the max-margin solution, we establish the scaling relationships and generalization patterns observed in Figure \ref{fig:ti}.
\end{enumerate}

\subsection{Simplified model}
We use a single-layer single-head Transformer model, summarized in Section \ref{sec:ti}. To facilitate our analysis, we simplify this model in the following ways. 

We merge the key-query matrices into a single $H \times H$ matrix $\mathbf{M} \equiv K^\intercal Q$, and merge the value matrix $\mathbf{V}$ with the hidden weights $\vr{W}_1$ into a single $H \times H$ matrix $\vr{W} \equiv \vr{W_1} \vr{V}$. For inputs $\vr{x}_1, \vr{x}_2, \ldots, \vr{x}_L$, our reparameterized Transformer computes the $L$th logit $y_L$ as
\begin{equation} \label{eq:transformer_mlp}
y_L \equiv f(\vr{x}_1, \ldots, \vr{x}_L) = \sum_{i = 1}^H r_i \phi (\vr{w}_i \cdot \vr{v}_L)
\end{equation}
where
\[\vr{w}_r =\begin{pmatrix}
    r_1 \\ r_2\\ \vdots \\r_H
\end{pmatrix}
\quad 
\vr{W} =\begin{pmatrix}
    \Big|    & \Big|    &   & \Big| \\
    \vr{w}_1 & \vr{w}_2 & \cdots & \vr{w}_H \\
    \Big|    & \Big|    &   & \Big| \\
\end{pmatrix}^\intercal
 \,.\]
and
\begin{align*}
    \vr{v}_L &= a_{1L} \, \vr{x}_1 + \ldots + a_{LL}\, \vr{x}_L \\
    a_{ij} &= \text{softmax}\, \Px{\frac{1}{H} \vr{x}_i^\intercal \vr{M} \vr{x}_j}
\end{align*}
To simplify the analysis and unify the treatment of the direct prediction and reasoning trace cases, we ignore autoregressive generation in the RT model and consider classification accuracy only on the final logit, after a correct chain has been presented. Doing so constitutes an upper bound on the RT model's performance. We nonetheless identify generalization weaknesses in the RT model independent of the autoregressive failures commonly posited as the source of failure for chain-of-thought prompting \cite{bengio_scheduled_sampling,schmidt_generalization,wu_more_is_less,feng_eff_reasoning_len}.

\paragraph{Sample-coordinate representation.} We next describe a \textit{sample-coordinate representation} of the hidden weights $\vr{w}_i$. From Eq \eqref{eq:transformer_mlp}, it is evident that the gradient of $y_L$ with respect to $\vr{w}_i$ is
\begin{align*}
    \frac{\d y_L}{\d \vr{w}_i} &\propto \vr{v}_L \\
    &= a_{1L} \, \vr{x}_1 + \ldots + a_{LL}\, \vr{x}_L
\end{align*}
which is a sum over input embeddings $\vr{x}_\ell$. Hence, if we train our model using gradient descent, all updates to $\vr{w}_i$ can be expressed as a linear combination of input embeddings. At the end of training, we may decompose $\vr{w}_i$ as
\[\vr{w}_i = \vr{w}_i^0 + c_1^i \vr{x}_1 + c_2^i \vr{x}_2 + \ldots + c_V^i \vr{x}_V \,,\]
where $\vr{w}_i^0$ is its value at initialization and $\vr{x}_1, \ldots, \vr{x}_V$ correspond to the $V$ distinct embeddings in the task vocabulary. If we assume the model begins with a zero or very small initialization, then $\vr{w}_i$ can be wholly characterized by the \textit{sample coordinates} $c_1, c_2, \ldots c_V$, so called because they correspond to the coordinates of $\vr{w}_i$ in the span of the training samples. We study solutions to the TI task in this \textit{sample-coordinate space}, ignoring the effect of the initialization. Indeed, we train our Transformers in a \textit{rich} learning regime, where the influence of initialization is diminished. Sample coordinates are superficially similar to the \textit{dual coefficients} that weight the contribution of training examples in kernel methods.

Note that
\begin{equation} \label{eq:dual_dot_x}
\vr{w}_i \cdot \vr{x}_1 = \vr{w}_i^0 \cdot \vr{x}_1 + c_1^i ||\vr{x}_1||^2 + \sum_{\ell=2}^V c_\ell^i (\vr{x}_1 \cdot \vr{x}_\ell) \,.
\end{equation}
Since input embeddings are sampled from an isotropic standard Gaussian, we have that
\begin{align*}
    \E [||\vr{x}_1||^2] &= \Theta \Px{H} \\
    \E [\vr{w}_i^0 \cdot \vr{x}_1] &= \Theta (\sqrt{H}) \\
    \E [\vr{x}_1 \cdot \vr{x}_\ell] &= \Theta (\sqrt{H})
\end{align*}
Hence, for very wide models $H \gg V$, we conclude that $\frac{1}{H} \vr{w}_i \cdot \vr{x}_1 \approx c_1^i$ (we consider this condition more carefully below starting with Proposition \ref{prop:dual_signal}). We may therefore interpret our model as
\begin{equation*}
y_L \approx H \sum_{i=1}^H r_i \phi(a_{1L} \,c_1^{i} + \ldots + a_{LL}\, c_L^i) \,.
\end{equation*}
When we analyze the margin of our model, we will frequently drop the extra factor of $H$, which serves only to change the scale of the margin without impacting the classification. Later, when we report results based on estimated sample coordinates, we estimate the sample coordinate for $\vr{x}$ from a weight vector $\vr{w}_i$ precisely as $\hat{c}^i = \frac{1}{H} \vr{w}_i \cdot \vr{x}$.

\paragraph{Uniform attention weights.} Our final simplification is to assume that the attention weights are uniform: $a_{1L} = a_{2L} = \ldots = a_{LL} = \frac{1}{L}$. This rather drastic simplification is nonetheless supported by empirical measurements (Figure \ref{fig:app_att}). We also give the following intuition for why the model settles on uniform attention on our TI task, based on max-margin considerations.

Let $\mathcal{X} = \{\vr{x}_{\ell_1}, \vr{x}_{\ell_2}, \ldots, \vr{x}_{\ell_L}\}$ be the set of training examples with length $L$, labeled by the function $y(\vr{x}_{\ell_1}, \ldots, \vr{x}_{\ell_L}) = \pm 1$. Then the margin is given by
\[\delta = \min_{\ell_1, \ldots, \ell_L} \, y(\vr{x}_{\ell_1}, \ldots, \vr{x}_{\ell_L}) \sum_i r_i \phi(a_{{\ell_1} \ell_L} \, c_{\ell_1}^i + \ldots + a_{{\ell_L} \ell_L} \, c_{\ell_L}^i) \,,\]
subject to $\sum_i r_i^2 = 1$ and $\sum_{ij} \Px{c_j^i}^2 = 1$ (or equivalent norm constraints). Suppose there exists a unique training example that lies on the margin, given by indices $m_1, \ldots, m_L$. Let $\mathcal{I}$ be the set of hidden activation indices $i$ such that $a_{{m_1} {m_L}} c_{m_1}^i + \ldots + a_{{m_L} {m_L}} c_{m_L}^i > 0$. Then the magnitude of the margin is given by
\[|\delta| = a_{{m_1} {m_L}} \sum_{i \in \mathcal{I}} r_i c_{m_1}^i + \ldots + a_{{m_L} {m_L}} \sum_{i \in \mathcal{I}} r_i c_{m_L}^i \,. \]
If the model attains perfect training accuracy (as it does empirically), then $\delta = |\delta|$. Attention weights must sum to 1, so they form a convex sum over elements $\sum_{i \in \mathcal{I}} r_i c_{m_t}^i \equiv e_t$. Given free choice over the attention weights, to maximize the margin, we could allocate all weight to the index $\argmax_t e_t$. Similarly, given free choice over the magnitudes of elements $e_t$, we could allocate all available weight (allowed under our norm constraints) to the index $\argmax_t a_{m_t, m_L}$.

Suppose we set $a_{m_1, m_L} = 1$ and increased the weight of $e_1 = \sum_{i \in \mathcal{I}} r_i c_{m_1}^i$. Then the margin on the training example indexed by $m_1, \ldots, m_L$ is now large, but it may no longer lie on the margin. The \textit{new} example that minimizes $\delta$ may now be very small. Indeed, due to the symmetry of the transitive inference task, any symbol may appear in any position and any class with even probability across all training examples. Hence, if $a_{m_1, m_L}$ and $e_1$ are large, then there exists a training example $m_1, m_2', \ldots, m_L'$ in the opposite class as $m_1, \ldots, m_L$, with a very small or very negative margin because of our weight allocation.

Hence, because of this symmetry, we conjecture that the magnitude of each $e_t$ tends to remain about the same in the max margin set, and no position or symbol is favored above any other. The margin cannot be increased by concentrating attention on one position or symbol, only shrunk. Thus, uniform attention weights are favorable, and we confirm empirically in Figure \ref{fig:app_att} that a trained transformer indeed exhibits uniform attention weights on our task.

With uniform attention weights $a_{1L} = \ldots = a_{LL} = \frac{1}{L}$, our model simplifies down to
\begin{equation*}
y_L = \frac{1}{L} \sum_{i=1}^H r_i \phi(\vr{w}_i \cdot (\vr{x}_1 + \cdots + \vr{x}_L)) \,.
\end{equation*}

\begin{figure}
    \centering
    \includegraphics[width=0.7\linewidth]{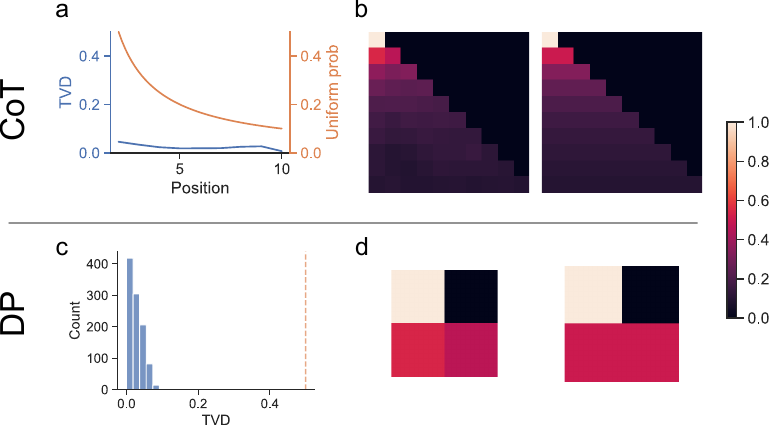}
    \caption{\textbf{Attention weights are uniform.} We show empirically that the attention weights become uniform in a Transformer trained on our transitive inference task. \textbf{(a)} \textit{In blue}, total variation distance (TVD) between a uniform distribution and the attention weights in an RT model, measured across 1000 examples. The x-axis indicates the position of the query token. \textit{In orange}, the probability assigned to each position by a uniform distribution, plotted for comparison. The TVD is very close to zero across all positions, indicating that the attention weights are close to uniform. \textbf{(b)} \textit{Left}, an example attention matrix from the RT model. \textit{Right}, attention matrix with uniform entries, plotted for comparison. \textbf{(c)} Histogram of TVD between a uniform distribution and attention weights in a DP model at position 2 (the output position for a DP model), measured across 1000 examples. \textit{In orange}, the probability assigned to position 2 by a uniform distribution (which is 0.5). As in (a), the TVD is fairly close to 0. \textbf{(d)} The same as (b), plotted with the DP model.}
    \label{fig:app_att}
\end{figure}

\subsection{Max-margin solution} \label{sec:max_margin_sol}
Neural networks trained on a classification objective routinely discover the max-margin solution \cite{soudry_implicit_bias_of_gd,wei_regularization,chizat_implicit_bias_of_gd}. This observation has been productively applied to explaining the features in trained models \cite{morwan_feature_learning,tong_learning_richness}. We study the structure of the max margin weights for our transitive inference task, and use it to form the basis of a normative solution our model should learn. We then use this solution to reveal how its properties constrain the scaling relationships between model width $H$ and task parameters $B,D$.

Following \citet{morwan_feature_learning}, we consider the max \textit{average} margin as a reasonable proxy to the strict max margin. If training examples are sampled from the distribution $(\vr{x}_1, \ldots, \vr{x}_L), y \sim \mathcal{D}$, then the average margin is given by
\[\overline{\delta} = \E_\mathcal{D} \left[ \frac{y}{L} \sum_{i=1}^H r_i\,\phi(\vr{w}_i \cdot (\vr{x}_1 + \ldots + \vr{x}_L)) \right] \,.\]
Denote the max margin solution to be
\begin{align*}
\{r_i^*, \vr{w}_i^*\} = &\argmax_{r_i, \vr{w}_i}\, \overline{\delta} \\ &\text{subject}\,\text{to}\; |r_i| = ||\vr{w}_i|| = 1
\end{align*}

Note, we apply our norm constraint to individual activations $|r_i|$ and $||\vr{w}_i||$. This scheme is more restrictive than a global constraint where for instance $\sum_i r_i^2 = 1$ and $\sum_i ||\vr{w}_i||^2 = 1$. Our choice of constraints reflects the assumption that each activation contributes equally to the final classification, and that the margin cannot be improved by weighting some activations heavier than others. We also find empirically that the sign of each $r_i$ tends to remain the same throughout training. We later show in Figure \ref{fig:app_max_margin} that the structure of the max margin solution under these assumptions is descriptive of the actual weights learned by a model after training.

Suppose that training examples with positive label $y = 1$ have distribution $\mathcal{D}^+$ and examples with negative label $y = -1$ have distribution $\mathcal{D}^-$. Denote sets of indices $\mathcal{I}^+ = \{i : r_i^* > 0\}$ and $\mathcal{I}^- = \{i : r_i^* < 0\}$. Denote the input embeddings by $\overline{\vr{x}} = \frac{1}{L}(\vr{x}_1 + \ldots + \vr{x}_L)$. Then by linearity of expectation,
\begin{align}
\overline{\delta} &= \frac{1}{2}\sum_{i \in \mathcal{I}^+} \left( \E_{\mathcal{D}^+}\left[ \phi(\vr{w}_i \cdot \overline{\vr{x}}) \right] - \E_{\mathcal{D}^-} \left[ \phi(\vr{w}_i \cdot \overline{\vr{x}})\right] \right) \nonumber \\
&- \frac{1}{2}\sum_{j \in \mathcal{I}^-} \left( \E_{\mathcal{D}^+}\left[ \phi(\vr{w}_j \cdot \overline{\vr{x}}) \right] - \E_{\mathcal{D}^-} \left[ \phi(\vr{w}_j \cdot \overline{\vr{x}})\right] \right) \,. \label{eq:margin}
\end{align}

Under the sample-coordinate representation, we have that
\[\frac{L}{H} \vr{w}_i \cdot \overline{\vr{x}} \approx c_1^i + \ldots + c_L^i \,\]
so
\[\frac{L}{H}\phi(\vr{w}_i \cdot \overline{\vr{x}}) \approx (c_1^i + \ldots + c_L^i) \mathbbm{1}_{c_1^i + \ldots + c_L^i > 0} \,.\]
We characterize the structure of the max margin solution in our sample-coordinate space defined by $c_t^i$.

We begin with an intuitive discussion before presenting a formal result. We use a superscript on $\vr{x}_t^n$ to denote that symbol embedding $\vr{x}_t$ lies on branch $n$ (while the index $t$ would now correspond to a particular depth within branch $n$). Given input embeddings $\vr{x}_1^m, \vr{x}_2^n, \ldots, \vr{x}_L^n$, our scheme for constructing the reasoning trace requires that all symbols $\vr{x}_2^n, \vr{x}_3^n, \ldots, \vr{x}_L^n$ lie on the same branch. The symbol $\vr{x}_1^m$ shares the same branch $m = n$ only if the input has a positive label $y = 1$. Otherwise when $y = -1$, the symbol $\vr{x}_1^m$ lies on a different branch $m \neq n$. In cases where $y = 1$ (and all symbols lie on the same branch), our margin over these examples is
\[\frac{1}{2} \sum_{i \in \mathcal{I}^+} \E_{\mathcal{D}^+} [\phi(\vr{w}_i \cdot \overline{\vr{x}})] - \frac{1}{2} \sum_{j \in \mathcal{I}^-} \E_{\mathcal{D}^+} [\phi(\vr{w}_j \cdot \overline{\vr{x}})] \,.\]
Hence for $i \in \mathcal{I}^+$, the margin is maximized when the sum $c_1^{i,n} + \ldots + c_L^{i,n}$ is large. However, for indices $j \in \mathcal{I}^-$, the margin is maximized when the sum $c_1^{j,m} + \ldots + c_L^{j,n}$ is very small or negative. In this way, we want the quantity $c_1^{i,n} + \ldots + c_L^{i,n}$ to be large when $\vr{x}_1^n$ is on the \textit{same} branch $n$ as the remaining symbols $\vr{x}_2^n, \ldots, \vr{x}_L^n$, suggesting that $c_1^{i,n} > 0$ when $c_2^{i,n}, \ldots, c_L^{i,n} > 0$. However, when the first symbol belongs to a different branch $m \neq n$, we desire for $c_1^{i,m} + \ldots + c_L^{i,n}$ to be small or negative, suggesting that the sign of symbols $c_t^{i,m}$ should be the opposite of the sign for symbols $c_t^{i,n}$ lying on a different branch $n$. We might therefore expect that activations for $i \in \mathcal{I}^+$ may have a structure such that all sample coordinates sharing a branch should also share the same sign, some positive and some negative.

The opposite is true when considering the case $y = -1$:
\[-\frac{1}{2} \sum_{i \in \mathcal{I}^+} \E_{\mathcal{D}^-} [\phi(\vr{w}_i) \cdot \overline{\vr{x}}] + \frac{1}{2} \sum_{j \in \mathcal{I}^-} \E_{\mathcal{D}^-} [\phi(\vr{w}_j) \cdot \overline{\vr{x}}] \,.\]
For activations corresponding to an index $i \in \mathcal{I}^+$, the margin is now maximized when the sum $c_1^{i,n} + \ldots + c_L^{i,n}$ is small or negative; for index $j \in \mathcal{I}^-$, the margin is maximized when the sum $c_1^{j,m} + \ldots + c_L^{j,n}$ is large. The first condition on $c_1^{j,n} + \ldots + c_L^{j,n}$ suggests that all coordinates sharing the same branch $n$ should be negative or balanced in sign, whereas the second condition on $c_1^{j,m} + \ldots + c_L^{j,n}$ suggests that at least some coordinates should be large and positive. Hence, we might expect that activations for $j \in \mathcal{I}^-$ should exhibit a mixed proportion of positive and negative sample coordinates across branches.

\begin{figure}
    \centering
    \includegraphics[width=0.7\linewidth]{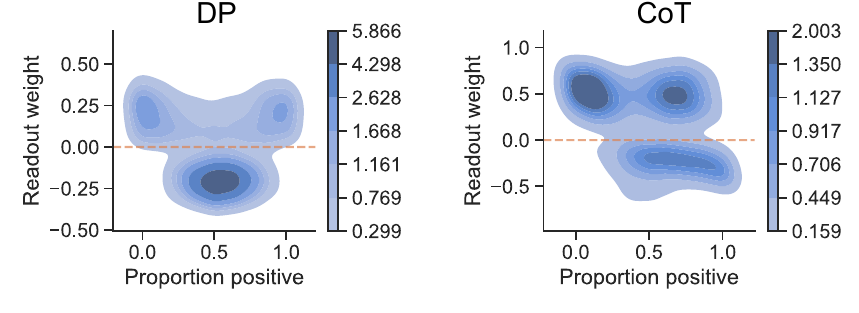}
    \caption{\textbf{Trained models learn the max margin solution.} We plot the kernel density estimate (KDE) over the proportion of positive, estimated sample coordinates per task branch in (\textit{left}) a direct-prediction model and (\textit{right}) reasoning trace model. The proportion is indexed by the x-axis. The y-axis indexes the readout weight of the corresponding weight vector, from which the sample coordinates are estimated. For positive readout weights, we see that proportions are bimodal around 0 and 1. As the readout weight becomes negative, proportions tend to coalesce in a single range. These observations are consistent with implementing a max margin solution. Given a weight vector $\vr{w}_i$ and symbol embedding $\vr{x}_j$, the corresponding sample coordinate is estimated as $c_j^i = \vr{w}_i \cdot \vr{x}_j$.}
    \label{fig:app_max_margin}
\end{figure}

We can formalize these intuitions in the case where $L = 2$ (corresponding to the direct prediction model) and the magnitude of $|c_t^n|$ is the same across all symbols, in Proposition \ref{prop:max_margin} below. We find empirically that the structure of weights in trained models are consistent with this analysis (Figure \ref{fig:app_max_margin}).

\begin{proposition} \label{prop:max_margin}
Under the sample-coordinate representation, suppose that the magnitude of sample coordinates $|c_k^{i,n}|$ is the same across all depth indices $k$, activation indices $i$, and branch indices $n$. If $L = 2$ and the task parameters $B, D \gg 1$, then the following properties hold for the set of sample coordinates $c_k^{i,n}$ that maximize the margin in Eq \ref{eq:margin}:
\begin{itemize}
    \item For activation indices $i \in \mathcal{I}^+$, $\text{sign} (c_k^{i,n}) = \text{sign} (c_{k'}^{i,n})$ for all $k, k' \in [D]$. In words, all coordinates that share the same branch also have the same sign.
    Furthermore, exactly half of the branches have coordinates with positive sign, and half of branches have coordinates with negative sign.
    \item For activation indices $j \in \mathcal{I}^-$, $\sum_k \mathbbm{1} \{ {c_k^{j,n} > 0} \} = \sum_k \mathbbm{1} \{c_k^{j, n'} > 0\}$ for all $n, n' \in [B]$. In words, the proportion of positive coordinates is the same across all branches.
\end{itemize}
\end{proposition}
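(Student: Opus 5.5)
Since each hidden unit contributes separately to Eq \eqref{eq:margin} under the per-unit norm constraints, I would maximize the margin one activation at a time. With $L=2$ and all $|c_k^{i,n}| = c$, each unit is described by a sign pattern $\sigma_k^{n} = \text{sign}(c_k^{i,n}) \in \{\pm 1\}$. The pre-activation $c_1 + c_2$ takes only the values $2c$ (if the two signs agree and are positive), $0$, or $-2c$. So $\phi(c_1+c_2) = 2c\,\mathbbm{1}\{\sigma_1=\sigma_2=+1\}$. Each unit's margin term is therefore, up to the factor $c$, a difference of two probabilities. The first is the probability that a positive pair, two symbols on the same branch at depth distance at most $k$, has both signs positive. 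The second is the same probability for a negative pair, two symbols on different branches. I would treat the training distribution as approximately uniform over such pairs; for $B, D \gg 1$ the boundary effects of the distance cutoff should be negligible. Writing $p_n$ for the fraction of positive coordinates on branch $n$, this gives, to leading order,
\[
P^+ \approx \frac{1}{B}\sum_n p_n^2, \qquad P^- \approx \frac{1}{B(B-1)}\sum_{n\neq n'} p_n p_{n'} .
\]
The within-branch term is really an average over pairs within distance $k$. It is bounded by the quantity above only in the limit, and attains it only when signs are constant along the branch or uncorrelated at short range. Pinning this down is a technicality I would handle by Cauchy--Schwarz along the branch.

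For $i \in \mathcal{I}^+$ we maximize $P^+ - P^-$. Let $\bar p = \frac{1}{B}\sum_n p_n$. To leading order in $B$,
\[
P^+ - P^- \approx \frac{1}{B}\sum_n p_n^2 - \bar p^2 = \operatorname{Var}_n(p_n),
\]
up to a correction of order $1/B$. The within-branch pair term is maximized for fixed $p_n$ when all signs on a branch agree, which I would show by Cauchy--Schwarz or by a rearrangement argument on adjacent pairs. This forces $p_n \in \{0,1\}$, giving the first claim. Then $\operatorname{Var}_n(p_n) = q(1-q)$, where $q$ is the fraction of all-positive branches, and this is maximized at $q = 1/2$. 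That gives the "half the branches" claim. I would also check the $O(1/B)$ correction, which slightly favours a different $q$, and argue that it only shifts $q$ by $O(1/B)$, consistent with $B \gg 1$.

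For $j \in \mathcal{I}^-$ we instead maximize $P^- - P^+ \approx -\operatorname{Var}_n(p_n) + O(1/B)$, together with the within-branch correlation term, which now favours alternating or uncorrelated signs. The variance term is maximized exactly when all $p_n$ are equal, which is the second claim. The cross-branch term $\bar p^2$ minus the within-branch correlation then determines the common value of $p_n$, but the proposition does not require it.

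The main obstacle I expect is making the "$\approx$" rigorous. The training pairs are restricted to distance at most $k$, so within-branch pairs are not uniform, and for $\mathcal{I}^-$ the optimal $p_n$ being exactly equal relies on the leading-order reduction. I would handle this by proving that the exact objective equals the variance-form functional plus terms of order $1/B$ and $1/D$ that do not depend on the inequality between branches, and then taking $B, D \to \infty$.
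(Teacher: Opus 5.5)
Your main line is the paper's proof. With equal magnitudes, each unit contributes $P^+-P^-$. Once $P^+=\frac1B\sum_n p_n^2$ and $P^-\approx\bar p^2$, the objective is $\operatorname{Var}_n(p_n)$, which you maximize for $\mathcal{I}^+$ and minimize for $\mathcal{I}^-$. Under the paper's modeling assumption this is correct, but two local points need fixing.

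\emph{The step to $p_n\in\{0,1\}$.} Use the paper's $p_n^2\le p_n$. It gives $\operatorname{Var}_n(p_n)\le\bar p(1-\bar p)\le\frac14$, with equality iff every $p_n\in\{0,1\}$ and $\bar p=\frac12$. Your ``for fixed $p_n$ the within-branch term is maximized when all signs agree'' is not meaningful, since for $p_n\in(0,1)$ the signs cannot all agree.

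\emph{The $O(1/B)$ correction.} There is no shift of $q$. With negative pairs on distinct branches, $P^-=\frac{1}{B(B-1)}\sum_{n\neq n'}p_np_{n'}$, and one checks $P^+-P^-=\frac{B}{B-1}\operatorname{Var}_n(p_n)$ exactly.

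The genuine gap is the layer you add to handle the distance-$k$ cutoff. Let $\rho_n$ be the fraction of within-branch training pairs (distance at most $k$) on branch $n$ that are both positive. Contrary to your claim, $\rho_n$ is not bounded by $p_n^2$, even in the limit. If the positives on a branch form one block much longer than $k$, then $\rho_n\approx p_n$; Cauchy--Schwarz along the branch only yields $\rho_n\lesssim p_n$.

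Hence, for fixed $k$ and $D\to\infty$, the exact objective is not the variance functional plus $O(1/B)+O(1/D)$ corrections. A concrete counterexample: make half the branches positive on depths $1,\dots,D/2$ and negative afterwards, and the other half the reverse. Every $p_n=\frac12$, so the variance functional vanishes. Yet $P^+\approx\frac12$ and $P^-\approx\frac14$, up to $O(k/D)$ terms from pairs straddling the midpoint. The true margin is therefore $\frac14-O(k/D)$, the same to leading order as the constant-sign optimum.

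So with a real short-range cutoff, constancy along a branch is decided only by the $O(k/D)$ boundary effects your reduction discards. For $\mathcal{I}^-$, spacing positives more than $k$ apart drives $\rho_n$ below $p_n^2$, so equal $p_n$ no longer follows from the variance functional either.

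The paper never attempts this reduction. It simply posits $\Pr_{\mathcal{D}^+}(c_k^{i,m}>0\cap c_\ell^{i,n}>0)=\frac1B\sum_m\mu_{i,m}^2$ ``for large $D$''. That is, it treats the two symbols of a positive pair as independent uniform draws from the branch. You should adopt that assumption explicitly and drop the promised rigorous reduction, which is false as stated.
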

\begin{proof}
Given a training example that consists of the two input embeddings $\vr{x}_k^m, \vr{x}_\ell^n$, in sample coordinates we have
\[\frac{1}{H} \phi(\vr{w}_i \cdot (\vr{x}_k^m + \vr{x}_\ell^n)) = (c_k^{i,m} + c_\ell^{i,n}) \mathbbm{1}\{c_k^{i,m} + c_\ell^{i,n} > 0\} \,.\]
Since $|c_k^{i,m}| = |c_\ell^{i,n}|$, we can rewrite our indicator as $\mathbbm{1}\{(c_k^{i,m} > 0) \,\cap\, (c_\ell^{i,n} > 0)\}$, and
\[\E_{\mathcal{D}}[\phi(\vr{w}_i \cdot  (\vr{x}_k^m + \vr{x}_\ell^n)) \propto \Pr (c_k^{i,m} > 0 \,\cap\, c_\ell^{i,n} > 0) \,.\]

Define $\mu_{i, m} = \Pr(c_k^{i,m} > 0)$ for an index $k$ sampled uniformly at random from branch $m$. Recall that examples with positive label $y = 1$ are sampled from the same branch. Hence for large task depth $D$, we have
\[\Pr\nolimits_\mathcal{D^+} (c_k^{i,m} > 0 \,\cap\, c_\ell^{i,n} > 0) = \frac{1}{B} \sum_{m=1}^B \mu_{i,m}^2 \,.\]
Recall that examples with negative label $y = -1$ are sampled from different branches. Hence for large task breadth $B$, we have
\[\Pr\nolimits_\mathcal{D^-} (c_k^{i,m} > 0 \,\cap\, c_\ell^{i,n} > 0) = \overline{\mu}_i^2 \equiv \Px{\frac{1}{B} \sum_{m=1}^B \mu_{i,m}}^2 \,.\]
Altogether then, we have
\begin{equation} \label{eq:margin_half}
E_{\mathcal{D}^+}\left[ \phi(\vr{w}_i \cdot \overline{\vr{x}}) \right] - \E_{\mathcal{D}^-} \left[ \phi(\vr{w}_i \cdot \overline{\vr{x}})\right] \propto \Px{\frac{1}{B} \sum_{b=1}^B \mu_{i,m}^2} - \overline{\mu}_i^2 \,.
\end{equation}
If $i \in \mathcal{I}^+$, maximizing the margin in Eq \eqref{eq:margin} requires maximizing Eq \eqref{eq:margin_half}. Otherwise if $i \in \mathcal{I}^-$, then maximizing the margin requires minimizing Eq \eqref{eq:margin_half}. We would therefore like to characterize the extrema of this expression.

We first characterize the maximum, revealing the structure of activations $i \in \mathcal{I}^+$. Note that $\mu_{i,m} \in [0, 1]$, so we must have that $\mu_b^2 \leq \mu_b$, with equality when $\mu_{i,m} = 1$ or $\mu_{i,m} = 0$. This suggests that
\[\Px{\frac{1}{B} \sum_{m=1}^B \mu_{i,b}^2} - \overline{\mu}_i^2 \leq \overline{\mu}_i - \overline{\mu}_i^2 \,, \]
again with equality if all $\mu_{i,m}$ equal either $0$ or $1$. The quantity $\overline{\mu}_i - \overline{\mu}_i^2$ is maximized for $\overline{\mu}_i = \frac{1}{2}$. In this way, the outcome that $\mu_{i,m}$ be either $0$ or $1$ corresponds to the condition that all coordinates within a task branch share the same sign. The outcome that $\overline{\mu}_i = \frac{1}{2}$ corresponds to the condition that half the task branches consist of positive coordinates, and the other half consist of negative coordinates.

Now for the minimum (relating to activations $i \in \mathcal{I}^-$), we observe that Eq \eqref{eq:margin_half} is always non-negative. Indeed, if we interpret $\mu_{i,m}$ as draws from a random variable, then Eq \eqref{eq:margin_half} is simply its variance. This expression is minimized when $\Px{\frac{1}{B} \sum_{m=1}^B \mu_{i,b}^2} = \overline{\mu}_i^2$, which occurs when $\mu_{i,b} = \overline{\mu}_i$. This result corresponds to the condition that the proportion of positive coordinates is the same across all task branches.
\end{proof}

Proposition \ref{prop:max_margin} holds only under specific conditions, but as our discussion above illustrates, we expect the general structure of the max margin solution to continue holding for large $L$ and unconstrained coordinate magnitudes. Indeed, we find empirically that trained models tends to obey this max margin structure (Figure \ref{fig:app_max_margin}).

\subsection{Scaling relationships}
Armed with a general understanding of the max margin solution, we turn our attention to understanding the scaling relationships uncovered in Figure \ref{fig:ti} and summarized in Table \ref{tab:ti_scales}, which illustrates the scalings between task depth, breadth, and model width required to saturate the training accuracy on the transitive inference task.

Our approach for studying these scalings proceeds as follows. Given a hidden weight $\vr{w}_i$ and symbol embedding $\vr{x}_1$, Eq \eqref{eq:dual_dot_x} suggests that $\frac{1}{H} \vr{w}_i \cdot \vr{x}_1 \approx c_1^i$ (or for any other index $j \neq 1$, without loss of generality), provided that $H \gg V$. We formalize this intuition below in Proposition \ref{prop:dual_signal}.

\begin{proposition} \label{prop:dual_signal}
    Let $\vr{x}_1, \vr{x}_2, \ldots, \vr{x}_V$ be iid draws from a standard Gaussian with dimension $H$. Suppose $c_1, c_2, \ldots, c_V$ are sample coordinates such that $|c_1| = |c_2| = \ldots = |c_V| = c$, for some fixed finite $c$, and $\vr{w} = c_1 \vr{x}_1 + c_2 \vr{x}_2 + \ldots + c_V \vr{x}_V$. If $V = o(H)$, then as $H \rightarrow \infty$, $\frac{1}{H} \vr{w} \cdot \vr{x}_1$ converges in probability to $c_1$.
\end{proposition}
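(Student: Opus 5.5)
Expanding $\vr{w}$ against $\vr{x}_1$ gives the decomposition
\[
\frac{1}{H}\vr{w}\cdot\vr{x}_1 = c_1 \frac{\|\vr{x}_1\|^2}{H} + \frac{1}{H}\sum_{\ell=2}^V c_\ell\, (\vr{x}_1\cdot\vr{x}_\ell),
\]
which mirrors Eq \eqref{eq:dual_dot_x} without the initialization term. We control the two pieces separately and then combine them with Slutsky's lemma, or directly with a union of two Chebyshev events.

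For the first term, $\|\vr{x}_1\|^2$ is a $\chi^2_H$ variable with mean $H$ and variance $2H$. Chebyshev's inequality gives $\Pr(|\|\vr{x}_1\|^2/H - 1| > \epsilon) \le 2/(H\epsilon^2) \to 0$, so the first term converges in probability to $c_1$, since $c_1$ is fixed.

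For the cross term, set $Z = \frac{1}{H}\sum_{\ell\ge 2} c_\ell\, \vr{x}_1\cdot\vr{x}_\ell$. We bound its second moment, using the fact that $\vr{x}_1$ is independent of $\vr{x}_2,\dots,\vr{x}_V$. Conditional on $\vr{x}_1$, the vector $\sum_{\ell\ge2} c_\ell \vr{x}_\ell$ is Gaussian $\mathcal{N}(\vr{0}, (V-1)c^2 \vr{I}_H)$, because the $\vr{x}_\ell$ are independent and each $|c_\ell| = c$. Hence $\vr{x}_1\cdot\sum_{\ell\ge2} c_\ell\vr{x}_\ell$ is conditionally $\mathcal{N}(0, (V-1)c^2\|\vr{x}_1\|^2)$. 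It follows that $\E[Z]=0$ and
\[
\E[Z^2] = \frac{(V-1)c^2\,\E\|\vr{x}_1\|^2}{H^2} = \frac{(V-1)c^2}{H}.
\]
This tends to $0$ precisely because $V = o(H)$, and Chebyshev's inequality then gives $Z \to 0$ in probability.

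Combining the two, for any $\epsilon>0$ we have $\Pr(|\frac1H\vr{w}\cdot\vr{x}_1 - c_1| > \epsilon) \le \Pr(|c_1|\,|\|\vr{x}_1\|^2/H - 1| > \epsilon/2) + \Pr(|Z|>\epsilon/2)$, and both probabilities vanish. The main subtlety is that $V$ grows with $H$. A naive bound that controls each cross term $\vr{x}_1\cdot\vr{x}_\ell/H = O_p(H^{-1/2})$ separately and sums absolute values would only give $O_p(V/\sqrt H)$, which requires $V = o(\sqrt H)$. The second-moment computation above avoids this because the cross terms are uncorrelated: for $\ell\neq\ell'$ we have $\E[(\vr{x}_1\cdot\vr{x}_\ell)(\vr{x}_1\cdot\vr{x}_{\ell'})]=0$. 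This lets the variances add linearly in $V$ rather than the standard deviations, which is exactly what delivers the sharp $V=o(H)$ condition.
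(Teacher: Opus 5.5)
Your proof is correct and follows the paper's argument: the same split into the diagonal term $c_1\|\vr{x}_1\|^2/H$ and the cross-term sum, with the cross term handled by a variance bound of order $c^2 V/H$ and Chebyshev's inequality, which is exactly where $V = o(H)$ enters. The only differences are minor. You apply Chebyshev to the $\chi^2_H$ variable where the paper invokes the law of large numbers, and you compute the cross-term variance exactly by conditioning on $\vr{x}_1$; both choices arguably handle the growing-dimension (triangular-array) setting more cleanly.
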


\begin{proof}
    We observe that
    \[\frac{1}{H}\vr{w} \cdot \vr{x}_1 = \frac{c_1}{H}||\vr{x}_1||^2 + \frac{1}{H} \sum_{j=2}^V c_j (\vr{x}_1 \cdot \vr{x}_j) \,.\]
    The Law of Large Numbers suggests that the quantity $c_1 ||\vr{x}_1||^2 / H$ converges almost surely to $c_1$. 
    
    Denote the sum over cross products as
    \begin{equation} \label{eq:cross_term_sum}
    S_H = \frac{1}{H}\sum_{j=2}^V c_j (\vr{x}_1 \cdot \vr{x}_j)
    \end{equation}
    We observe that
    \begin{align*}
        \E\Hx{S_H} &= 0 \\
        \Var\Px{S_H} &< c^2\frac{V}{H}
    \end{align*}
    Under Chebyshev's inequality, for any $\varepsilon > 0$, we must have that
    \[\Pr\Hx{|S_H| > \varepsilon} \leq \frac{c^2 V}{\varepsilon^2 H} \,.\]
    Since $V = o(H)$, the quantity $|S_H|$ must converge in probability to 0. In this way, we have overall that $\frac{1}{H} \vr{w} \cdot \vr{x}_1 \rightarrow c_1$ in probability.
\end{proof}
This proposition indicates that, so long as the number of symbols $V$ remains small with respect to the hidden dimension $H$, we do indeed have that $\frac{1}{H} \vr{w}_i \cdot \vr{x}_1 = c_1^i$ as $H$ increases. Similar to our max margin argument formalized in Proposition \ref{prop:max_margin}, we require that the magnitude of all coordinates $c$ are the same. As before, the symmetry of the transitive inference task suggests that we generally expect the magnitudes of these coordinates to be similar, though we explore a notable exception below in Section \ref{app:breadthwise_scale}. We next apply Proposition \ref{prop:dual_signal} to understand scaling relationships in our transitive inference task.

\subsubsection{Depth-wise scaling} \label{sec:depth_wise_scale}
Our TI task is parameterized by two numbers: \textit{breadth} $B$ (the number of chains of syllogisms) and \textit{depth} $D$ (the length of each chain). The total number of unique symbols $V$, ignoring special tokens for padding and termination, must therefore be $V = BD$.

\paragraph{Direct prediction model.} To explain the scaling relationships in Figure \ref{fig:ti}, we first fix $B$ while increasing $D$ and $H$ to understand depth-wise scaling with model width. Recall from Section \ref{sec:max_margin_sol}: the max margin solution requires that, for weights $\vr{w}_i$ corresponding to a positive activation index $i \in \mathcal{I}^+$, coordinates for symbols that share the same branch must also share the same sign. That is, for symbol embeddings $\vr{x}_k^{n}, \vr{x}_\ell^{n}$, we require that $\text{sign}(\vr{w}_i \cdot \vr{x}_k^n) = \text{sign}(\vr{w}_i \cdot \vr{x}_\ell^n)$. However, since $V = \Theta(D)$, if $D = \omega(H)$, then $V = \omega(H)$. Proposition \ref{prop:dual_signal} no longer holds, and we cannot guarantee that $\text{sign}(\vr{w}_i \cdot \vr{x}_k^n) = \text{sign}(\vr{w}_i \cdot \vr{x}_\ell^n)$. Indeed, if the magnitudes of all sample coordinates remain approximately the same, then our sum over cross terms in Eq \eqref{eq:cross_term_sum} explodes, leaving us with chance-level output: $\Pr(\text{sign}(\vr{w}_i \cdot \vr{x}_k^n) = \text{sign}(\vr{w}_i \cdot \vr{x}_\ell^n) ) = 1/2$. For this reason, $D = \mathcal{O}(H)$ poses a natural upper bound on scaling $D$, agreeing with the linear scaling we observe empirically for direct prediction models in Figure \ref{fig:ti}.

In general, we expect this requirement for sign-sharing among weights with positive activation indices to be the bottleneck that determines the scaling. For weights with negative activation index, our max-margin solution only expects that in each branch, a portion of sample coordinates are positive, and a portion are negative. This mild condition is satisfied at initialization, and we conjecture that its impact on scaling is minimal.

\paragraph{RT model.} 
Given this argument for linear scaling, why does the RT model settle on a markedly slower depth-wise scaling? The reason may be that the model's context length also grows with $D$. To construct a chain for a task with depth $D$, the RT model requires $L = \Theta(D)$ context tokens $\vr{x}_1, \vr{x}_2, \ldots, \vr{x}_L$, of which $\vr{x}_2, \ldots, \vr{x}_L$ enumerate over all symbols belonging to a branch. Hence, an increasing segment of the context is dedicated to this enumeration, potentially diluting the desired signal: whether $\vr{x}_1$ belongs in the same branch as the others.

We may analyze this setup as a distinguishability problem. Recall that symbol embeddings are sampled from a standard Gaussian $\vr{x} \sim \mathcal{N}(\vr{0}, \vr{I}_H)$ and fixed throughout training. If $H$ is the embedding dimension, then with respect to the randomness of the embeddings, the distribution of a negative example $F$ may be $\frac{1}{L} (\vr{x}_1 + \vr{x}_2 + \ldots \vr{x}_L) \sim F \equiv \mathcal{N}\Px{\vr{0}, \frac{1}{L} \vr{I}}$. For a positive example, $\vr{x}_1$ is duplicated at some point in the enumeration $\vr{x}_2, \ldots, \vr{x}_L$, so the distribution of a positive example $\vr{T}$ becomes $\frac{1}{L}(2\vr{x}_1 + \vr{x}_2 + \ldots \vr{x}_{L-1}) \sim T \equiv \mathcal{N}\Px{\vr{0}, \frac{2 + L}{L^2}\vr{I}_H}$. The model's ability to learn the task becomes a problem of \textit{statistical indistinguishability}: given a sample from either $F$ or $T$, with what probability can the model determine its origin? We answer this question with help from Proposition \ref{prop:stat_indist_ar_length}, which establishes an upper bound on the total variation distance (TV) between $F$ and $T$. 

This description is precise only if we resample the embeddings between each draw of a positive or negative example. It may, however, continue to be accurate when the vocabulary size is large, and inputs are not frequently repeated. We indeed observe that this framing is predictive for the empirically observed scaling.

\begin{proposition} \label{prop:stat_indist_ar_length}
    Define the distributions $F \equiv \mathcal{N}(\vr{0}, \frac{1}{L} \vr{I}_H)$ and $T \equiv \mathcal{N}(\vr{0}, \frac{2 + L}{L^2} \vr{I}_H)$. Let $\text{TV}(F, T)$ be the total variation distance between $F$ and $T$. Then for $L \rightarrow \infty$, $\text{TV}(F, T) = \mathcal{O}(\sqrt{H} / L)$.
\end{proposition}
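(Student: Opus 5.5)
Both $F$ and $T$ are centered isotropic Gaussians in $\R^H$ that differ only in scale, with variances $\sigma_F^2 = \frac{1}{L}$ and $\sigma_T^2 = \frac{2+L}{L^2} = \frac{1}{L}\Px{1 + \frac{2}{L}}$. The plan is to bound the total variation distance by a divergence that has a closed form for Gaussians, and then expand that form for large $L$. The most convenient route is Pinsker's inequality, $\text{TV}(F,T) \le \sqrt{\tfrac{1}{2}\,\mathrm{KL}(F\,\|\,T)}$, because the KL divergence between two isotropic Gaussians factorizes over the $H$ coordinates.

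First, write the ratio $\rho \equiv \sigma_F^2/\sigma_T^2 = \frac{L}{L+2}$. For one-dimensional centered Gaussians,
\[
\mathrm{KL}\Px{\mathcal{N}(0,\sigma_F^2)\,\|\,\mathcal{N}(0,\sigma_T^2)} = \tfrac{1}{2}\Px{\rho - 1 - \log \rho} .
\]
Independence across coordinates then gives
\[
\mathrm{KL}(F\,\|\,T) = \tfrac{H}{2}\Px{\rho - 1 - \log\rho} .
\]

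Second, expand this for $L \to \infty$. Set $\rho = 1 - \varepsilon$ with $\varepsilon = \frac{2}{L+2} = \Theta(1/L)$. Since $\rho - 1 - \log\rho = -\varepsilon - \log(1-\varepsilon) = \frac{\varepsilon^2}{2} + \mathcal{O}(\varepsilon^3)$, we get
\[
\mathrm{KL}(F\,\|\,T) = \frac{H\varepsilon^2}{4}\Px{1 + o(1)} = \Theta\Px{\frac{H}{L^2}} .
\]
To keep this a genuine inequality rather than only an asymptotic statement, I would use the elementary bound $-x - \log(1-x) \le \frac{x^2}{2(1-x)}$ for $x \in [0,1)$. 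With $\varepsilon \le 2/L$, this yields $\mathrm{KL}(F\,\|\,T) \le \frac{C H}{L^2}$ for an explicit constant $C$ once, say, $L \ge 2$.

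Third, apply Pinsker's inequality:
\[
\text{TV}(F,T) \le \sqrt{\frac{CH}{2L^2}} = \mathcal{O}\Px{\frac{\sqrt{H}}{L}} ,
\]
which is the claim. The bound is trivially true when $\sqrt{H}/L \gtrsim 1$, since TV never exceeds $1$, so no relation between $H$ and $L$ needs to be assumed.

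The only delicate point is choosing a divergence that scales correctly in $H$. KL (and likewise Hellinger, via $\text{TV} \le \sqrt{2}\,\mathrm{Hel}$ and the product formula for Hellinger affinity) grows linearly in $H$ for product measures, so taking the square root gives the $\sqrt{H}$ factor. A naive coordinatewise TV union bound would instead give a weaker factor of $H$. The remaining work is the routine Taylor control of $\rho - 1 - \log\rho$, which I do not expect to cause difficulty.
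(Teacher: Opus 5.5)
Your proposal is correct and follows essentially the same route as the paper: Pinsker's inequality, the closed-form KL divergence $\frac{H}{2}(\rho - 1 - \log\rho)$ between the two isotropic Gaussians, a Taylor expansion in $1/L$, and an elementary global inequality to make the bound non-asymptotic. The paper's version of that last step is $\frac{1}{1+u} + \log(1+u) - 1 \le \frac{u^2}{2}$ with $u = 2/L$, which becomes a slightly weaker form of your $-x - \log(1-x) \le \frac{x^2}{2(1-x)}$ under the substitution $u = x/(1-x)$; both give $\mathrm{KL}(F\,\|\,T) \le H/L^2$ for all $L$, so your constant can be taken as $C = 1$.
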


\begin{proof}
    From Pinsker's inequality, we know that
    \begin{equation} \label{eq:tv_pinsker}
    \text{TV}(F, T) \leq \sqrt{\frac{1}{2} D_{KL} (F \,||\, T)} \,.
    \end{equation}
    From the definitions of $F$ and $T$,
    \[D_{KL}(F \,||\, T) = \frac{H}{2} \Px{\frac{L}{2 + L} + \log \frac{2 + L}{L} - 1} \,.\]
    Taylor expanding this quantity around $L \rightarrow \infty$ reveals that
    \begin{align*}
        D_{KL}(F \,||\, T) &= \frac{H}{2} \Px{\frac{2}{L^2} - \frac{16}{3L^3} + \mathcal{O}\Px{\frac{1}{L^4}}} \\
        &= \mathcal{O}\Px{\frac{H}{L^2}} \,.
    \end{align*}
    In this way, we see that the KL divergence is bounded asymptotically by $\mathcal{O}(H / L^2)$. In fact, the function $\frac{H}{L^2}$ upper bounds the KL divergence for all $L$. To see how, define $u = 2 / L$ so that 
    \[g(u) \equiv \frac{2}{H} D_{KL} (F\,||\,T) = \frac{1}{1+u} + \log (1 + u) - 1 \,.\]
    Let
    \[f(u) = \frac{u^2}{2} - g(u) \,.\]
    Then $f(0) = 0$ and
    \begin{align*}
        f'(u) = \frac{u^2 (u + 2)}{(1 + u)^2},
    \end{align*}
    so $f'(u) > 0$ for $u > 0$. Hence, we must have $\frac{u^2}{2} \geq g(u)$ for all $u > 0$. Substituting $u = 2/L$, we must have that $D_{KL}(F\,||\,T) \leq H/L^2$ for all $L$. 
    
    Substituting our upper bound for the KL divergence into Eq \eqref{eq:tv_pinsker}, we conclude that
    \[\text{TV}(F, T) = \mathcal{O}(\sqrt{H} / L) \,.\]
\end{proof}

Returning to our original question, our model is presented with a sample from either $F$ or $T$, and must determine its origin. A classic result in statistical theory asserts that the probability of success is given by $\frac{1}{2}(1 + \text{TV}(F, T))$ (see for instance Theorem 2.2 in \citet{Tsybakov_nonparametric}). Since $L = \Theta(D)$, we expect that the model's accuracy is also bounded by $\frac{1}{2} + \mathcal{O}(\sqrt{H} / D)$. The high-accuracy contour should therefore exhibit the scaling $D = \mathcal{O}(\sqrt{H})$, which agrees with the square-root scaling we observe in Figure \ref{fig:ti}.

\subsubsection{Breadth-wise scaling} \label{app:breadthwise_scale}
A direct application of our argument in Section \ref{sec:depth_wise_scale} suggests that breadth-wise scaling should also be linear. However, linear scaling may be \textit{exceeded} if we also consider the possibility that the magnitude of sample coordinates differ between symbols.

Suppose we fix the task depth $D$, and scale the task breadth $B$ with model width $H$. Since the size of the vocabulary is $V = BD$, as we increase $B$, we note that $V = \Theta(B)$. As before, our max margin solution suggests that for a positive activation with index $i \in \mathcal{I}^+$ and any two symbol embeddings $\vr{x}_k^n, \vr{x}_\ell^n$ which belong to the same branch $n$, we require $\text{sign}(\vr{w}_i \cdot \vr{x}_k^n) = \text{sign}(\vr{w}_i \cdot \vr{x}_\ell^n)$. If the magnitude of all sample coordinates $c$ is about the same, then Proposition \ref{prop:dual_signal} suggests we should expect the scaling to be $B = o(H)$.

However, when scaling breadth, perhaps the structure of the task would allow us to bypass the assumption that all sample coordinates have the same magnitude. In particular, suppose that some fraction of the sample coordinates are zero. Their contribution to the cross sum in Eq \eqref{eq:cross_term_sum} would be removed, and the vocabulary may be increased.

Our max margin construction in Section \ref{sec:max_margin_sol} requires that all coordinates sharing a branch are uniformly positive or negative. What if we additionally allow some of them to be relatively small or zero? Since there are $H$ activations in our model, and $B$ total branches, suppose we allowed each activation to prioritize $B / H$ branches, and maintain nonzero coordinates only for these $B / H$ choices. Then Proposition \ref{prop:dual_signal} would suggest that if we now scale as $B / H = o(H)$ (or equivalently, $B = o(H^2)$), then we continue to have $\frac{1}{H}\vr{w}_i \cdot \vr{x}_1 = c_1$, and $\text{sign}(\vr{w}_i \cdot \vr{x}_k^n) = \text{sign}(\vr{w}_i \cdot \vr{x}_\ell^n)$. This is consistent with our observed quadratic scaling in both DP and RT models.

If our models indeed distribute sample coordinates in this fashion, then we should observe a linear relationship between the number of large, positive sample coordinates and the task breadth. We empirically observe precisely this trend in Figure \ref{fig:app_breadth_scale}.

\begin{figure}
    \centering
    \includegraphics[width=0.7\linewidth]{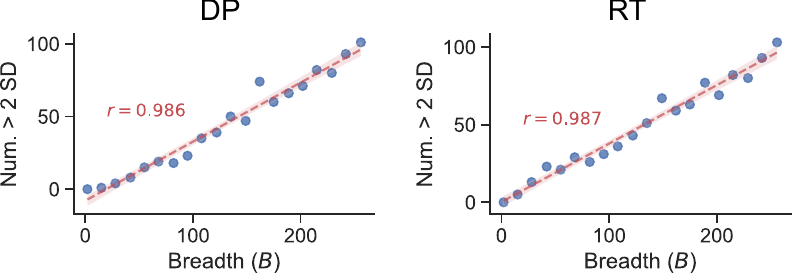}
    \caption{\textbf{Number of large sample coordinates increases with task breadth.} We compare across 20 models trained with varying task breadth, and estimate the sample coordinates from the weight vector corresponding to the largest positive readout weight. We determine that a sample coordinate is large if its magnitude is greater than 2 standard deviations from zero, compared across all other sample coordinates extracted from the weight vector. In both DP and RT models, we measure a strong linear relationship between task breadth and the number of large coordinates, supporting our conjecture about how models distribute sample coordinates to attain faster breadth-wise scaling.}
    \label{fig:app_breadth_scale}
\end{figure}

\paragraph{Distributing depth?} By distributing branches across $H$ activations, we attain a faster scaling than the linear rate suggested by our argument in Section \ref{sec:depth_wise_scale}. Could the same strategy be employed to improve our rate when scaling by depth?

The answer turns out to be no, since doing so blocks us from classifying positive examples correctly. Suppose we distribute $D$ depths among $H$ activations, with each activation implementing nonzero coordinates for $D / H$ depths. Then the proportion of nonzero coordinates per activation is $\frac{D / H}{D} = \frac{1}{H}$, which approaches zero as $H$ increases. Hence, there no longer exists a branch for which the proportion of positive coordinates is close to 1, which our max margin solution in Proposition \ref{prop:max_margin} requires to classify a positive example. For this reason, depth scaling remains limited to $D = o(H)$.

\subsection{Generalization outcomes} \label{app:generalization}
We next turn our attention to length generalization, and study why length generalization deteriorates quickly direct-prediction models, but remains intact their reasoning trace counterparts.

\subsubsection{DP model}
We observe in Figure \ref{fig:ti} that length generalization decays in our DP model as task breadth $B$ increases. In the following section, we consider why length generalization deteriorates, and ballpark estimate the rate at which it decays with $B$.

We begin by providing a broad intuition for why test performance decays at all. Suppose we restrict our training distribution to relations with distance $ \leq k$, then test on distances $> k$. Let $\vr{w}_j$ be a weight vector with negative activation index $j \in \mathcal{I}^-$, and sample coordinates indexed by branch $n$ and depth $k$, $c_k^{j,n}$. Our max margin solution (Section \ref{sec:max_margin_sol}) suggests that a proportion of coordinates within a branch will be positive.

However, if $k \ll D$, then the \textit{placement} of these positive coordinates need not be uniform. Indeed, we should expect that $c_p^{j,n} + c_q^{j,n}$ is small or negative for $d(x_p^n, x_q^n) \leq k$, increasing the margin on these points in the training set. On the other hand, $c_p^{j,n} + c_r^{j,n}$ has no such constraints when $d(x_p, x_r) > k$, and may be correspondingly large. Indeed, some proportion of coordinates in each branch must be large and positive in order to produce correct negative classifications. Hence, positive coordinates will tend to have small or negative neighbors within $k$ distance. When longer distances are prompted at test time, these large, positive coordinates may be selected, increasing the contribution from negative activations. Activations with positive index are uniformly positive or negative within a branch, so the contribution from positive activations should remain the same for larger test distances as they are for shorter distances. The overall result is a false negative. Indeed, we find empirically that the vast majority of test error are false negatives (Figure \ref{fig:app_dp_gen_acc}).

\begin{figure}
    \centering
    \includegraphics[width=0.75\linewidth]{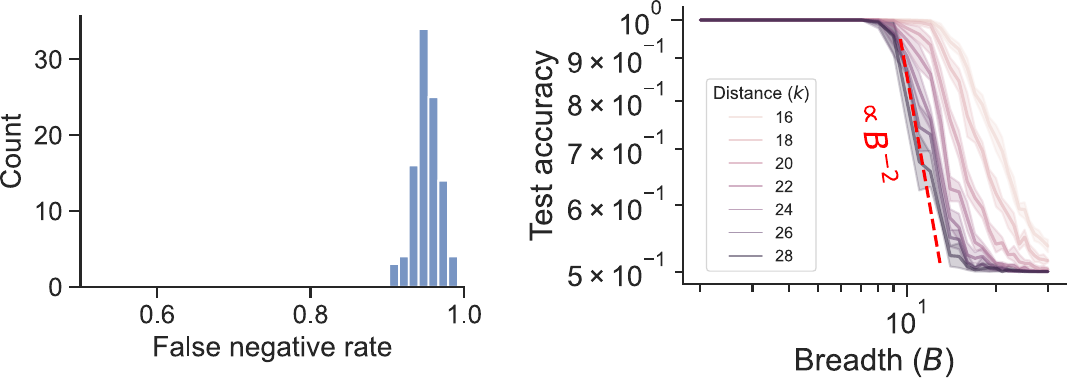}
    \caption{\textbf{Generalization in the direct prediction model.} \textit{Left}, we train 100 DP models with $B = 50$, $D = 10$, and training distance $k = 5$, then measure the false negative rate (FNR) on classifications for $k > 5$. FNR is very close to 1, supporting our assertion that logits become very negative when the DP model operates on longer distances. \textit{Right}, we measure the rate at which test accuracy falls as a function of task breadth, across varying test distances. Particularly for large test distances, the rate is close to our predicted $B^{-2}$ rate. Shading corresponds to 95 percent intervals estimated from 6 independently trained models each evaluated on 1000 examples.}
    \label{fig:app_dp_gen_acc}
\end{figure}

\subsection{Estimating the error.} Using these observations, we offer a back-of-the-envelope estimate for how test accuracy falls with increasing task breadth. We first conjecture how increasing breadth causes output logits to decrease, then use the rate of decrease to ballpark the classification accuracy, finding that test accuracy diminishes as $\mathcal{O}(B^{-2})$.

For a direct prediction model, the gradient of the logit $\hat{y} = \sum_{i=1}^H r_i \, \phi(\vr{w}_i \cdot (\vr{x}_p^{m} + \vr{x}_q^{n}))$ with respect to the hidden weights $\vr{w}_i$ is
\[\frac{\d y}{\d \vr{w}_i} \propto \vr{x}_p^{m} + \vr{x}_q^{n} \,.\]
This quantity is nonzero only if $\vr{w}_i \cdot (\vr{x}_p^{m} + \vr{x}_q^{n}) > 0$, or equivalently in sample coordinates, $c_{p}^{i,m} + c_{q}^{i,n} > 0$. Consider also the contribution from the loss $\mathcal{L}(y, \hat{y})$, for logit $\hat{y}$ and label $y$. If $\mathcal{L}$ is binary cross entropy, then $-\text{sign}\Px{\frac{\d \mathcal{L}}{\d \mathcal{\hat{y}}}} = \text{sign}(y)$. A gradient descent step in $\vr{w}_i$ on symbol embeddings $\vr{x}_p^m, \vr{x}_q^n$ therefore takes the following form, omitting the learning rate:
\[\Delta \vr{w}_i = \text{sign}(y)\, r_i (\vr{x}_p^m + \vr{x}_q^{n}) \mathbbm{1}\{c_p^{i,m} + c_q^{i,n} > 0\} \,.\]
If $c_p^{i,m} + c_q^{i,n} > 0$, then the resulting change in the output logit for $\vr{x}_p^m, \vr{x}_q^n$ is simply
\[\Delta\hat{y} = r_i (\Delta \vr{w}_i \cdot (\vr{x}_p^m + \vr{x}_q^n)) = \text{sign}(y) \, r_i^2 \,||\vr{x}_p^m + \vr{x}_q^n||^2 \,. \]
Since $r_i^2$ and $||\vr{x}_p^m + \vr{x}_q^n||^2$ are positive, we see that $\text{sign}(\Delta \hat{y}) = \text{sign}(y)$. Intuitively, output logits increase for positive examples ($y = 1$) and decrease for negative examples ($y = -1$).

If an example is negative, then the component symbols must belong to different branches. For a symbol $\vr{x}_p^m$, there are $B - 1$ other branches $n \neq m$ from which to form a valid negative example. If a consistent proportion of these $\mathcal{O}(B)$ other pairings produce a negative contribution on the output logit, then we conjecture that the negative contribution to a logit over the course of training may also be of order $\mathcal{O}(B)$. Specifically, we assume that a logit decomposes as $\hat{y} = \hat{y}_0 - \beta$, where $\hat{y} = \hat{y}_0$ when $B = 2$ (the smallest possible task breadth $B$) and $\beta = \mathcal{O}(B)$.

In this way, as $B$ increases, all logits decrease, resulting in the false negative classifications we observe in Figure \ref{fig:app_dp_gen_acc}. To estimate the impact on classification accuracy, we need to estimate $\Pr(\hat{y} > 0)$. If the distribution of $\hat{y}$ over test examples has a fixed variance $\sigma^2$, then Cantelli's inequality suggests that
\[\Pr(\hat{y}> 0) = \Pr(\hat{y} - (\hat{y}_0 - \beta) > -(\hat{y}_0 - \beta)) \leq \frac{\sigma^2}{\sigma^2 + (\hat{y}_0 - \beta)^2} \,.\]
If $\sigma^2$ does not vary with task breadth and $\beta^2 \gg \sigma^2 + \hat{y}_0^2$, then $\Pr(\hat{y} > 0) = \mathcal{O}\Px{B^{-2}}$. In Figure \ref{fig:app_dp_gen_acc}, we confirm that this scaling is indeed accurate.

\subsubsection{RT model}
In Figure \ref{fig:ti}, we observe that generalization in RT models remains strong regardless of task breadth $B$. This outcome is due to an inductive bias for a generalizing solution on our TI task facilitated by the reasoning trace. Specifically, we argue that for any test example, there exists a permutation that transforms it into a training example. In the absence of positional encodings, the model is invariant with respect to permutations in input position (up to the query token); if the model attains perfect training accuracy, then it attains perfect test accuracy also.

\begin{proposition} \label{prop:cot_test_to_train}
    Consider a task with training distance $k \geq 2$. Denote a test example as any sequence of symbols $x_1, x_2, \ldots, x_L$ such that $d(x_1, x_2) > k$ and $x_3, \ldots, x_L$ enumerate the remaining symbols that belong in the same branch as $x_2$. Denote a training example similarly, except $d(x_1, x_2) \leq k$. Let $\pi: \N^{L - 1} \rightarrow \N^{L-1}$ denote a permutation over $L - 1$ indices. Then for any test example, we may construct a permutation $\pi$ such that $x_{\pi(1)}, x_{\pi(2)}, \ldots, x_{L}$ is a training example. Note, the query token $x_L$ is not permuted.
\end{proposition}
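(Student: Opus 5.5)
The idea is that an RT input contains, besides the first symbol $x_1$, every symbol of the branch of $x_2$. So the multiset $\{x_2,\ldots,x_{L-1}\}$ of non-query, non-first positions is essentially the whole branch, and we only need to choose which of those symbols sits in position $2$. I would construct $\pi$ as a transposition that exchanges $x_2$ with a symbol $x_j$ from the trace such that $d(x_1, x_j) \leq k$. The result is again an enumeration of the same branch, just listed in a different order, and its first pair is within distance $k$.

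Write $x_1 = s_i^m$ and $x_2 = s_j^n$. The enumeration $x_3,\ldots$ contains all $s_t^n$ with $t \neq j$. By definition, $d(s_i^m, s_t^n) = |t-i|$ regardless of branch. Since $d(x_1,x_2) = |j-i| > k \geq 2$, I pick a target index $t$ as follows: take $t = i$ if $i \neq j$, and otherwise a neighbour $t = i \pm 1$ lying in $[D]$. We have $|j-i|>k\ge 2>0$, so $i\neq j$ and $t=i$ always works, with distance $0 \leq k$. If one prefers to avoid distance $0$, e.g. if training excludes the degenerate case $p=q$ in positive pairs, then $t = i+1$ or $i-1$ gives distance $1 \leq k$; since $D \geq 2$ at least one of these lies in $[D]$, and it is not $j$ because $|j-i|\ge 3$. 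Here $k\ge 2$ gives slack for the boundary and for avoiding $j$. In all cases $s_t^n$ appears somewhere among $x_3,\ldots,x_{L-1}$, say at position $p$. I then let $\pi$ be the transposition $(2\ p)$ acting on indices $1,\ldots,L-1$, and leave $x_L$ fixed.

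Next I verify that the permuted sequence is a training example. Position $1$ is still $x_1$. Position $2$ is $s_t^n$, with $d(x_1,s_t^n)\le k$. Positions $3,\ldots,L-1$ now list $\{s_j^n\}\cup\{s_u^n : u\ne j,t\}$, which is exactly the remaining symbols of branch $n$ apart from the new second symbol. The label is unchanged, since membership of $x_1$ in branch $n$ does not depend on which branch symbol occupies position $2$, so the classification token at $x_L$ is the same.

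The main obstacle is ordering. The RT format described in Section~\ref{sec:model} enumerates the branch in increasing index order, and the swapped enumeration may not be in that canonical order. I would handle this by noting that the statement only requires the result to be a training example in the set sense, namely "enumerate the remaining symbols." If canonical order is needed, I would compose $\pi$ with a further permutation of positions $3,\ldots,L-1$ that sorts them, which is still a permutation of the first $L-1$ indices. The permutation invariance of the position-free uniform-attention model then transfers training accuracy to test accuracy.
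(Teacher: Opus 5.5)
Your construction is the same transposition the paper uses: swap $x_2$ with a trace symbol within distance $k$ of $x_1$. However, the step asserting that ``in all cases $s_t^n$ appears somewhere among $x_3,\ldots,x_{L-1}$'' is unjustified, and this is precisely the point the paper's proof takes care of. The enumeration of branch $n$ occupies positions $3,\ldots,L$, and its last entry is the query token $x_L$, which $\pi$ must leave fixed. Your chosen symbol can be that token. In the canonical increasing order, take $x_1 = s_D^m$ and $x_2 = s_j^n$ with $j < D-k$. Then $x_L = s_D^n = s_i^n$, so $t = i$ cannot be moved into position $2$. Under the set-sense reading you adopt, $x_L$ may be any branch symbol other than $x_2$, and this also defeats your fallback. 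With $i = 1$ and $x_L = s_2^n$, the only neighbour $i \pm 1$ lying in $[D]$ is the query token itself.

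The missing idea is a counting argument, and this is where $k \ge 2$ is really used. It is not needed for avoiding $j$, which is automatic because $|t-i| \le k < |j-i|$. Since $D \ge |j-i|+1 \ge k+2$, the set $\{t \in [D] : 1 \le |t-i| \le k\}$ has $\min(i-1,k)+\min(D-i,k) \ge k \ge 2$ elements. Every corresponding $s_t^n$ lies in the trace, and at most one of them is $x_L$. So some $s_t^n$ sits at a position $p \in \{3,\ldots,L-1\}$, and the transposition $(2\ p)$ works; admitting distance $0$ only adds a candidate. This is exactly the paper's remark that at least two symbols lie within distance $k$ of $x_1$, ``allowing us to proceed even if $d(x_1,x_L) \le k$.'' With this step inserted, the rest of your verification goes through.

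One caveat on your closing remark: re-sorting positions $3,\ldots,L-1$ cannot always restore canonical order. If $x_2 = s_D^n$, the fixed query is $s_{D-1}^n$, while a canonically ordered trace for the new second symbol would end in $s_D^n$. The statement, like the paper, only needs the set sense, so simply drop that claim.
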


\begin{proof}
Given a test example $x_1, x_2, \ldots, x_L$, the set of symbols $x_2, \ldots, x_L$ enumerate \textit{all} nodes in a branch. Then there must exist an index $2 < j < L$ such that $d(x_1, x_j) \leq k$. Because the test distance must be at least 2, there are at least 2 symbols that satisfy $d(x_1, x_j) \leq k$, allowing us to proceed even if $d(x_1, x_L) \leq k$. Construct a permutation by swapping $x_2$ with $x_j$: $\pi(2) = j$ and $\pi(j) = 2$, while preserving the remaining indices. The set $x_{\pi(2)}, \ldots, x_{\pi(L - 1)}$ continues to enumerate all nodes in a shared branch, while $d(x_{\pi(1)}, x_{\pi(2)}) < k$. Thus, $x_{\pi(1)}, x_{\pi(2)}, \ldots, x_{L}$ must be a training example.
\end{proof}

Proposition \ref{prop:cot_test_to_train} demonstrates that there always exists a permutation that transformers a test example into a train example. Our RT model is trained without positional encodings, and is therefore permutation invariant in positions up to the query token. If the model attains perfect training accuracy, then it should attain perfect test accuracy also, irrespective of task topology. Hence, we conclude that the RT model generalizes well on our TI task. 

\clearpage
\section{Task and model details} \label{app:details}
Our code is available on GitHub at \url{https://github.com/wtong98/boule-or-baguette}. Exact configurations used to reproduce all plots can be found there.

The PITA dataset is available on Hugging Face at \url{https://huggingface.co/datasets/williamtong105/pita}.

Below, we summarize the details of our task setups and model configurations.

\subsection{Making a PITA}
The PITA dataset consists of statements in propositional logic together with a proof that demonstrates whether the statement is true or false. To convert these inputs into tokens that an LLM can process, we proceed through the following three steps:
\begin{enumerate}
    \item Construct a proof
    \item Translate the proof into Lean tactics and states
    \item Structure the Lean output with XML delimiters
\end{enumerate}
Appendix \ref{app:prop_logic} provides a brief primer on propositional logic, and the process by which proofs may be constructed for propositional statements. Our proofs are constructed using a standard \textit{focused proof search} procedure \cite{liang2009focused_proof_search} based on the implementation from \citet{an_propl}. We then translate the resulting sequence of inference rules discovered by the search procedure into their corresponding Lean tactics, and poll the Lean proof assistant for intermediate proof states following each tactic application. 

Following \citet{an_propl}, we also include \texttt{backtrack} tactics. These are a non-standard extension that directs Lean to resume tracking from the last unresolved proof state. Including \texttt{backtrack} allows us to ``undo" erroneous proof steps. If the current sequence of tactics does not terminate in a tautology, \texttt{backtrack} brings us to the last state where we can apply a different tactic to explore a different branch of the proof. Figure \ref{fig:pita} illustrates an example with a \texttt{backtrack} application. \citet{an_propl} find that including these ``mistake" demonstrations increases the model's theorem-proving success by a substantial margin. Including \texttt{backtrack} also allows us to treat proofs for both true and false statements in a uniform way: if the model explores all reasonable tactics through \texttt{backtrack} and discovers no valid tautologies, then the proof must be false.

The \texttt{backtrack} tactic itself is implemented using the same procedure as in \citet{an_propl}. The proof search procedure constructs a tree of possible inference rule applications and their resulting proof states. A proof of a true statement is typically extracted as the shortest path between the starting state and a successful terminating leaf. A proof that includes \texttt{backtrack} may instead be constructed through a depth-first-search over this tree. Once the traversal encounters a failing terminal leaf, we inject a \texttt{backtrack} tactic to represent a return back to the last node with alternative paths, and proceed down the next viable branch.

Finally, given the sequence of tactics and states, we prepare the final input to the LLM by wrapping the tactics and state components in XML tags. The XML has the following structure:

\begin{verbatim}
<state id="0">
  <if> ... </if>       <!-- propositions that are given -->
  <if> ... </if>
  <then> ... </then>   <!-- goal proposition to demonstrate -->
</state>
<tactic> ... </tactic>
<state id="1"> ... </state>
<tactic> ... </tactic>
<state id="2"> ... </state>

<!-- ... and so on ...  -->

<backtrack to="m" />  <!-- possible backtrack to state `m` -->

<!-- ... and so on ...  -->

<state id="n">
   <complete />        <!-- indicates that the current subgoal is complete -->
</state>
<success />            <!-- or `<failure />` -->
\end{verbatim}

If the propositional statement is true, we terminate in a \texttt{<success />} tag. Otherwise, we terminate in a \texttt{<failure />} tag. These tags constitute the special classification token that the model must output to indicate whether the statement is true or false. The entire sequence is then passed to the model-specific tokenizer and processed.

Models are trained on a prompt/completion input format, where loss is computed only on the completion tokens. The prompt is delineated by all tokens up to \texttt{<state id="0">...</state>}. The completion is the remainder of the proof for the RT model, or only the success/failure tag for the DP model. To ensure that the model tokenizer cleanly separates the prompt from the completion, we include an addition double bar \texttt{||} separating the prompt from completion, so the input looks like \texttt{<state id="0">...</state>||<tactic>...}. 

\subsection{PITA split construction}
PITA is composed of four different splits.

We discuss the details of each split and provide examples below.

\paragraph{\textsc{Full.}} This split corresponds to the full space of propositional statements, enumerating over all 3 connectives $\rightarrow, \vee, \wedge$ as well as propositional atoms $\top,\bot$ and 3 variables $p, q, r$. We exhaustively enumerate all statements up to 5 atoms in size (including enumerations over parenthetical groups), resulting in 3.6 million statements and 1.4 billion tokens. All examples are well under the 32K max context length of our models. Examples from this split are
\begin{align*}
    &p \rightarrow p \vee q \\
    (&p \wedge q) \rightarrow p \vee (r \rightarrow \bot) \\
    (&p \vee q) \wedge r \wedge \top
\end{align*}

\paragraph{\textsc{Imply.}} This split corresponds to the implicational fragment of propositional logic. The setting is the same as \textsc{Full}, except we restrict to a single connective $\rightarrow$. Somewhat surprisingly, doing so does not reduce the space of expressible propositions. The implicational fragment is \textit{functionally complete}, so every propositional statement (interpreted as an input/output mapping between truth values) expressible in \textsc{Full} is also expressible in \textsc{Imply}. In this sense, the task is essentially the same as \textsc{Full}, but formatted differently. We exhaustively enumerate all statements up to 7 atoms in size, resulting in 11 million statements and 11.6 billion tokens. Virtually all examples are under the 32K max context length of our models. Examples from this split are
\begin{align*}
    &p \rightarrow q \rightarrow p \\
    (&p \rightarrow \bot) \rightarrow (p \rightarrow \top) \\
    &p \rightarrow ((q \rightarrow r) \rightarrow q)
\end{align*}

\paragraph{\textsc{Or.}} This split corresponds to a membership checking task expressed in propositional form. Each statement has the form $p \rightarrow q_1 \vee q_2 \vee \ldots \vee q_n$, where exactly half of the statements have $p \in \{ q_1 \ldots q_n\}$ (corresponding to true statements) and half the statements do not (corresponding to false statements). A random parenthetical grouping is sampled for the variables on the right hand side. The variable names for $q_1 \ldots q_n$ are forced to be unique, and drawn from a pool of 100 thousand possible names to minimize the chance of name collisions. We sample 8.7 million of these statements, resulting in 78.5 billion tokens. A little less than 1 percent of these examples exceed the 32K max context length of our model; these examples are removed from the dataset. Examples from this split are
\begin{align*}
    &p \rightarrow q_1 \vee p \vee q_2 \vee q_3 \vee q_4 \\
    &p \rightarrow q_1 \vee q_2 \vee q_3 \vee q_4 \\
    &p \rightarrow q_1 \vee q_2 \vee q_3 \vee q_4 \vee p \vee q_5 \\
\end{align*}

\paragraph{\textsc{PHP.}} The pigeonhole principle is a classic counting argument that states the following: given $m$ pigeons and $n$ holes, if $m > n$, there there must be at least one hole occupied by two or more pigeons. This statement can be framed in propositional logic in the following way. Let $p_{ij}$ be the proposition ``pigeon $i$ occupies hole $j$". Then the pigeonhole principle can be written as
\begin{equation} \label{eq:php}
\bigwedge_{i=1}^m \Px{p_{i1} \vee p_{i2} \ldots \vee p_{in}} \rightarrow \bigvee_{j=1}^n \, \bigvee_{i_1 \neq i_1} (p_{i_1 j} \wedge p_{i_2j}) \,.
\end{equation}
The antecedent corresponds to the condition that every pigeon must occupy a hole, and the consequent corresponds to the condition that at least one hole contains at least two pigeons. Proving these propositions reduces essentially to a constraint-satisfaction problem. Many such problems feature exponential time complexity to solve. Similarly, proving PHP propositions may take exponentially many steps in $m$ and $n$, making them a common choice for stress-testing proof systems \cite{haken1985php_hard,balsiger2000php_benchmarks}. We use a slightly harder version of this task (still commonly used a stress-test) where each pigeon may occupy only a subset of holes, and only a subset of pairs are checked for double-occupancy. We sample 36 of these conditions, and exhaustively enumerate over all possible parenthetical groupings, resulting in 51 thousand statements and 1.0 billion tokens. Examples from this split are
\begin{align*}
    (&p_{11} \vee p_{13}) \wedge (p_{23}) \rightarrow (p_{12} \wedge p_{22}) \vee (p_{13} \wedge p_{23}) \\
    (&p_{12}) \wedge (p_{31} \vee p_{32} \vee p_{33}) \rightarrow (p_{11} \wedge p_{21}) \\
    (&p_{21} \vee p_{22}) \rightarrow (p_{11} \wedge p_{21}) \vee (p_{11} \wedge p_{31}) \vee (p_{22} \wedge p_{32})
\end{align*}

Each split is characterized by its \textit{depth} and \textit{breadth}. Depth is calculated as the unique number of proof states required to prove the statement, taken as the largest \texttt{id} number on the \texttt{<state>} tags. Hence, if the model backtracks to an earlier state, it is not double-counted.

Breadth is counted as the number of unique statements that can be enumerated for propositions of a given size, where size is measured by the number of atoms in the proposition. The \textit{or} ($\vee$) and \textit{and} ($\wedge$) connectives are commutative and associative, so we consider statements like $(p \vee q) \vee r$ to be the same as $q \vee (p \vee r)$, and do not count them as two unique statements. Furthermore, we distinguish propositional variables only by identity, rather than name. Hence, the statement $p \rightarrow q \rightarrow p$ is considered the same as $p' \rightarrow q' \rightarrow p'$, though the variable names are different.

\subsection{PITA model configurations}
All models are finetuned using Hugging Face \cite{wolf2019huggingface}. Inference is performed using vLLM \cite{kwon2023vllm}. Specific training and inference setup are available on GitHub: \url{https://github.com/wtong98/boule-or-baguette}

Models are finetuned using QLoRA \cite{dettmers2023qlora} with rank $r = 16$, scaling factor $\alpha = 32$, and dropout probability $p = 0.05$. We target all linear modules. Our optimizer is 8 bit paged AdamW \cite{dettmers20218_8bit,loshchilov2017adamw}. We find that a relatively large learning rate of $2 \times 10^{-4}$ works well, with a linear warm-up for 500 iterations followed by constant learning rate. We use a batch size of 32 examples. We find that accuracy typically plateaus after 2500 iterations (80 thousand examples) across all splits and prolonged training does not improve it, so we terminate training after 2500 iterations and evaluate the model. A maximum context length of 32 thousand tokens is enforced across all models. All models we evaluate have a native context window of at least 32 thousand. 

Evaluation is conducted by allowing the model to autoregressively generate the next token until it produces a \texttt{<success />} or \texttt{<failure />} tag. If the model does not generate one of these tags before reaching its max context length, we terminate generation and designate that trial as a failure. Otherwise, classification accuracy is evaluated based on the terminating tag. Next tokens are sampled greedily.

\subsection{TI task details}
Our transitive inference task is similar to the forward-chaining setups prevalent in the TI literature \cite{lippl_ti,kay_ti,stachenfeld_ti}. The key difference is that rather than use a single chain, we employ $B$ parallel chains of inferences, and the model must additionally learn cross-chain inferences.

We offer additional details on the sampling procedure for positive and negative examples. Suppose we have symbols $s_i^n$ indexed by depth $i$ and branch $n$. To construct an example with label $y = 1$, we first sample a symbol $s_i^n$ uniformly at random, then sample its partner $s_j^n$ where $|j - i| \leq k$ during training and $|j - i| > k$ during testing, also uniformly at random from all possible choices. To construct an example with a negative label, we perform the same procedure but ensure that the two symbols fall on different branches, resulting in a pair $s_i^n, s_j^m$ where $n \neq m$. Note, another valid way to construct a negative example would be to reverse a relation (e.g. $s_2^n \rightarrow s_1^n$). We omit this case for simplicity, finding it unnecessary to reproduce the generalization characteristics we observed in PITA, and leave its exploration for future work.

\subsection{TI model configurations}
We use single-layer, single-head, decoder-only Transformers with no positional encodings, no layer norm, no bias, and no residual connections. Embeddings are fixed at initialization and do not change over the course of training. Transformers are initialized using $\mu$P parameterization \cite{yang2021mup,bordelon2022mup}. Specifically, for the model given in Section \ref{sec:model}, we initialize the readout weights as $\vr{w}_r \sim \mathcal{N}(\vr{0}, \vr{I}_H)$. We initialize all other weight matrices ($\vr{K}, \vr{Q}, \vr{V}, \vr{W}_1$) with entries sampled iid from $\mathcal{N}(0, H)$. All activations are additionally scaled by a prefactor $\frac{1}{H}$ before passing to the next layer.

We use a learning rate of $1 \times 10^-2$ and train our models using AdamW for 100 thousand iterations. To ensure the model sees every symbol about the same number of times, we set our batch size to be $BD$, where $B$ is the number of branches and $D$ is the depth of the task. Unless otherwise stated, all evaluations are performed using 1024 samples.

\clearpage
\section{Additional figures} \label{app:additional_fig}

\begin{figure}[!h]
    \centering
    \includegraphics[width=0.8\linewidth]{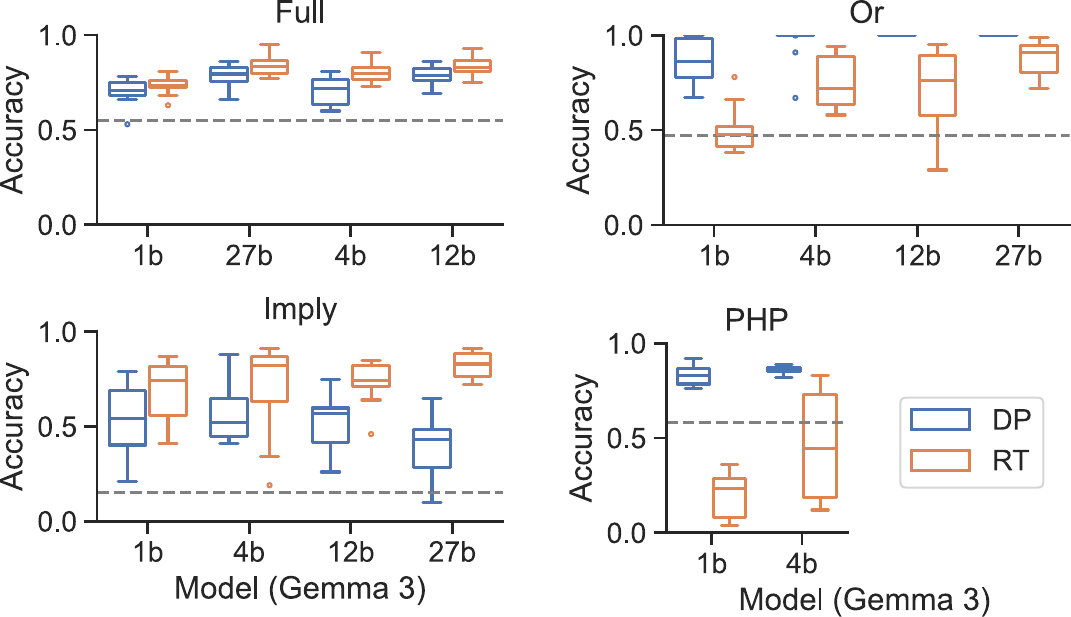}
    \caption{\textbf{Generalization accuracy in Gemma 3 on PITA splits.} The setting is the same as in Figure \ref{fig:pita_res}, but evaluated on a selection of the Gemma 3 family of models \cite{team2025gemma}. The trends are the same as observed for Qwen2.5-Coder, in which shallow and wide \textit{boule-shaped} splits favor RT models, while the reverse is true for deep and narrow \textit{baguette-shaped} splits. Computational constraints prevent training models larger than 10B on \textsc{PHP}. Boxplots are constructed from 10 runs, where each model is evaluated on 100 test samples. Box lines illustrate the median and quartiles. Outliers are determined from 1.5 times the inter-quartile range.}
    \label{fig:pita_gemma}
\end{figure}

\begin{figure}[!h]
    \centering
    \includegraphics[width=0.85\linewidth]{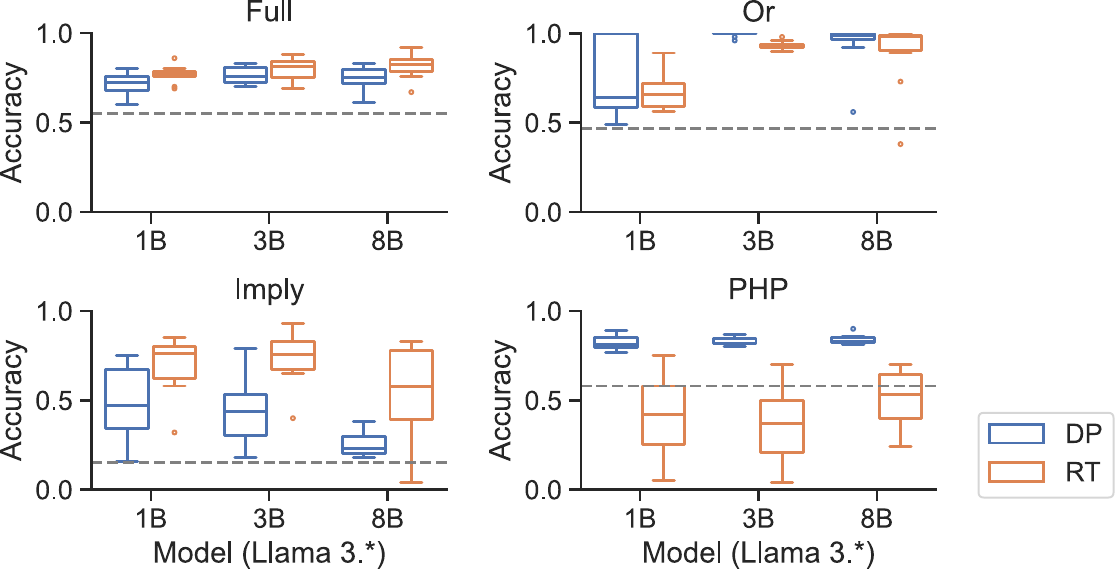}
    \caption{\textbf{Generalization accuracy in Llama 3 series on PITA splits.} The setting is the same as in Figure \ref{fig:pita_res}, but evaluated on a selection of the Llama 3 family of models \cite{llama3}. The trends are the same as observed for Qwen2.5-Coder, in which shallow and wide \textit{boule-shaped} splits favor RT models, while the reverse is true for deep and narrow \textit{baguette-shaped} splits. Boxplots are constructed from 10 runs, where each model is evaluated on 100 test samples. Box lines illustrate the median and quartiles. Outliers are determined from 1.5 times the inter-quartile range.}
    \label{fig:pita_llama}
\end{figure}

\begin{figure}
    \centering
    \includegraphics[width=0.8\linewidth]{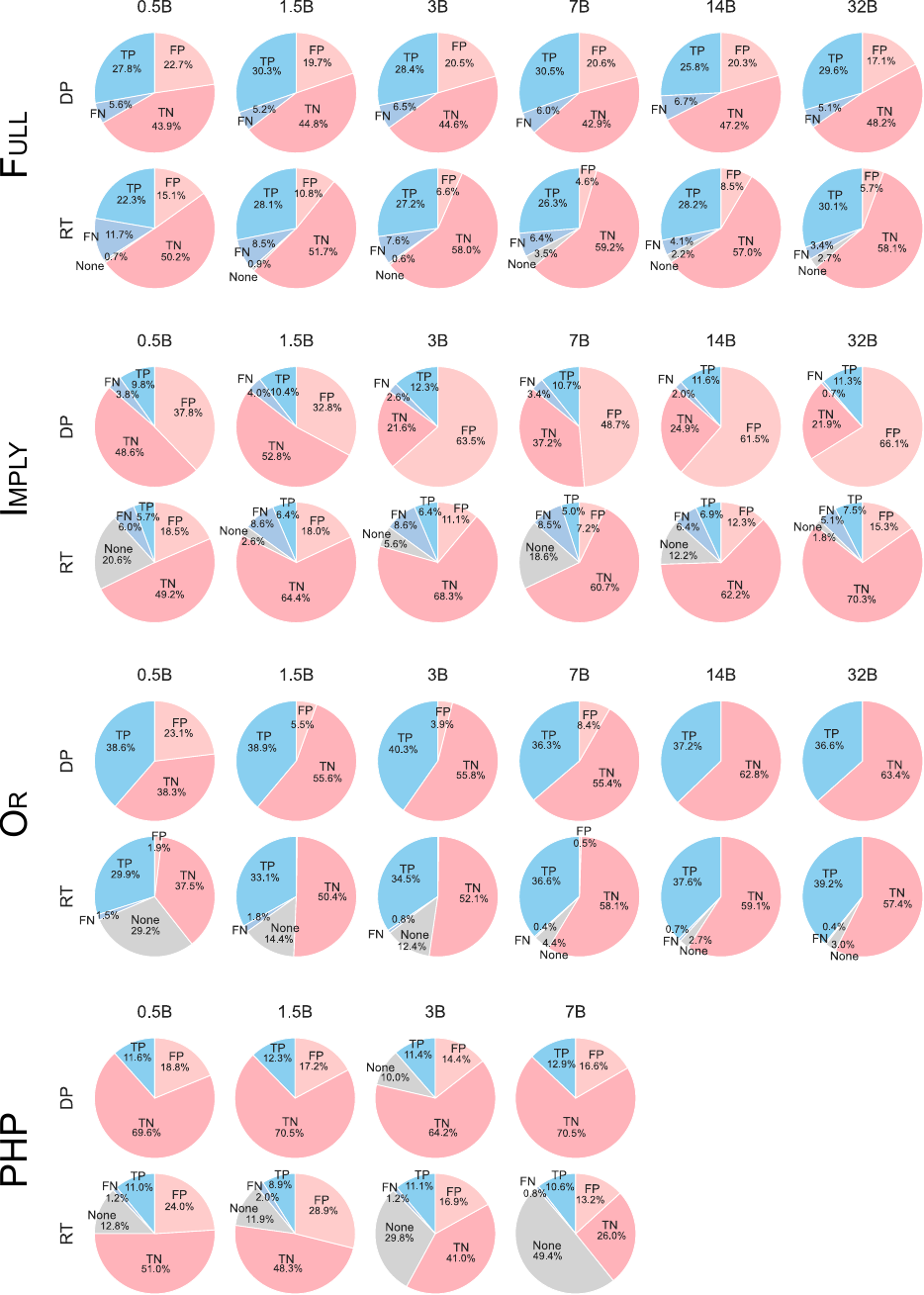}
    \caption{\textbf{Accuracy breakdown for Qwen2.5-Coder models.} For each combination of split, model size, and input format, we plot the generalization accuracy breakdown between true positives (TP), false positives (FP), true negatives (TN), false negatives (FN), and examples for which the model does not terminate within the max context length (None). The proportion of negative examples witnessed during generalization is typically higher than the proportion witnessed during training. Hence, models frequently commit false positives, but using a reasoning trace appears to lesson this bias. However, RT models tend not terminate, and a large portion of their error appears to stem from this failure to terminate.}
    \label{fig:app_breakdown}
\end{figure}

\begin{figure}
    \centering
    \includegraphics[width=0.7\linewidth]{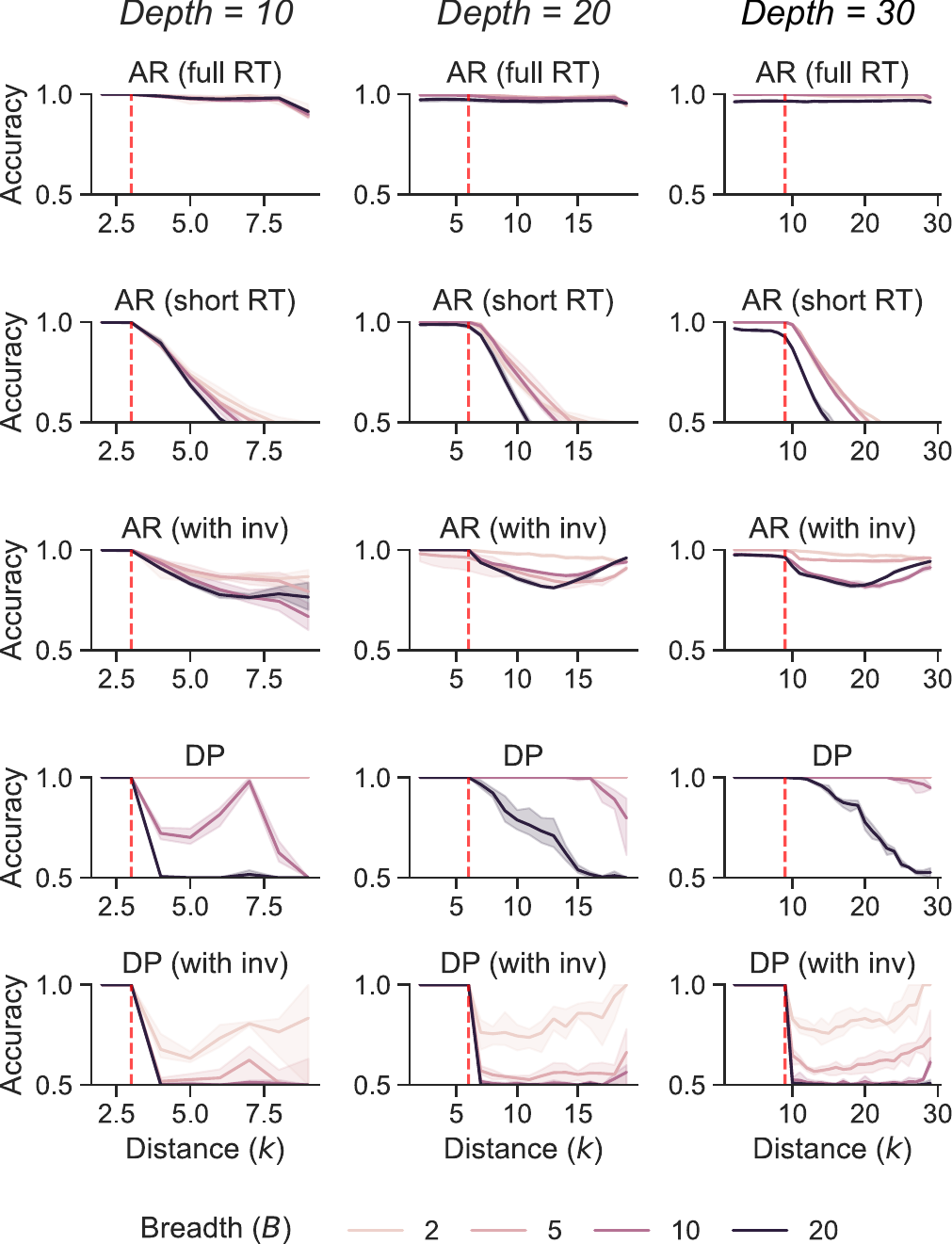}
    \caption{\textbf{Generalization performance on TI task for different input formats.} We compare generalization accuracy across several possible input formats in RT and DP models. \textit{Full RT} corresponds to the full enumeration over branch symbols used in the main text. \textit{Short RT} terminates the enumeration if the first input symbol is recovered. \textit{With inv} includes inverted negative examples. For example, if $s_1 \rightarrow s_2$ is a valid relation, then $s_2 \rightarrow s_1$ would be an inverted negative example. The results presented in the main text are with full RT and no inverted negative examples (first and fourth rows). Alternative input formats tend to do worse. In particular, truncating the reasoning trace early substantially worsens performance. Shaded error regions correspond to 95 percent intervals estimated from 3 seeds.}
    \label{fig:app_variation}
\end{figure}

\clearpage
\section{Example output} \label{app:ex_out}
We illustrate typical reasoning trace output from our LLMs finetuned on PITA across the four dataset splits. In each case, we use a Qwen2.5-Coder-7B model finetuned for 2000 steps, precisely the setting in our figures above. Whitespace has been added to the XML output to improve readability.

\subsection{\textsc{Full} split: example outputs}

Example of a successful output on the \textsc{Full} split. The model output is token-for-token identical to the ground truth reasoning trace. Note, the maximum depth observed during training is 4, whereas this example has a maximum depth of 7. Nonetheless, the output is exactly identical to the ground truth. This kind of exact match is typical across successful examples in all four splits. Even in unsuccessful cases, the output typically presents a superficially plausible and syntactic facsimile of correct Lean code, suggesting a high degree of precision in learning the basic syntax of the task.

\begin{minipage}[]{\linewidth}
    \scriptsize
    \begin{minipage}[t]{0.45\linewidth}
        {\footnotesize\itshape Ground truth}
        \begin{verbatim}
<state id="0">
  <if>p1 p2 p3 : Prop</if>
  <then>⊢ (p2 ∨ False) ∨ (True ∧ False) ∧ False</then>
</state>
<tactic>apply Or.inl</tactic>
<state id="1">
  <if>case h</if>
  <if>p1 p2 p3 : Prop</if>
  <then>⊢ p2 ∨ False</then>
</state>
<tactic>apply Or.inl</tactic>
<state id="2">
  <if>case h.h</if>
  <if>p1 p2 p3 : Prop</if>
  <then>⊢ p2</then>
</state>
<backtrack to="1" />
<state id="1">
  <if>case h</if>
  <if>p1 p2 p3 : Prop</if>
  <then>⊢ p2 ∨ False</then>
</state>
<tactic>apply Or.inr</tactic>
<state id="3">
  <if>case h.h</if>
  <if>p1 p2 p3 : Prop</if>
  <then>⊢ False</then>
</state>

<!-- truncated -->

<state id="7">
  <if>case h.left.right</if>
  <if>p1 p2 p3 : Prop</if>
  <then>⊢ False</then>
</state>
<backtrack to="0" />
<state id="0">
  <if>p1 p2 p3 : Prop</if>
  <then>⊢ (p2 ∨ False) ∨ (True ∧ False) ∧ False</then>
</state>
<failure />
        \end{verbatim}
    \end{minipage}
    \begin{minipage}[t]{0.5\linewidth}
        {\footnotesize\itshape Model output}
        \begin{verbatim}
<state id="0">
  <if>p1 p2 p3 : Prop</if>
  <then>⊢ (p2 ∨ False) ∨ (True ∧ False) ∧ False</then>
</state>
<tactic>apply Or.inl</tactic>
<state id="1">
  <if>case h</if>
  <if>p1 p2 p3 : Prop</if>
  <then>⊢ p2 ∨ False</then>
</state>
<tactic>apply Or.inl</tactic>
<state id="2">
  <if>case h.h</if>
  <if>p1 p2 p3 : Prop</if>
  <then>⊢ p2</then>
</state>
<backtrack to="1" />
<state id="1">
  <if>case h</if>
  <if>p1 p2 p3 : Prop</if>
  <then>⊢ p2 ∨ False</then>
</state>
<tactic>apply Or.inr</tactic>
<state id="3">
  <if>case h.h</if>
  <if>p1 p2 p3 : Prop</if>
  <then>⊢ False</then>
</state>

<!-- truncated -->

<state id="7">
  <if>case h.left.right</if>
  <if>p1 p2 p3 : Prop</if>
  <then>⊢ False</then>
</state>
<backtrack to="0" />
<state id="0">
  <if>p1 p2 p3 : Prop</if>
  <then>⊢ (p2 ∨ False) ∨ (True ∧ False) ∧ False</then>
</state>
<failure />
        \end{verbatim}
    \end{minipage}
\end{minipage}

\clearpage
Example of an unsuccessful output on the \textsc{Full} split. The output matches the ground truth exactly until the tactic following state 3. The correct tactic in this case is \texttt{cases h2}, but the model erroneously produces \texttt{apply False.elim h2}. The tactic application is obviously incorrect (\texttt{h2} does not even contain a \texttt{False} atom), but the following output is syntactically well-formed. The model terminates prematurally following this error.

\begin{minipage}[]{\linewidth}
    \scriptsize
    \begin{minipage}[t]{0.45\linewidth}
        {\footnotesize\itshape Ground truth}
        \begin{verbatim}
<state id="0">
  <if>p1 p2 p3 : Prop</if>
  <then>⊢ (True ∨ p1) ∧ (p2 → p2) → p3</then>
</state>
<tactic>intro h1</tactic>
<state id="1">
  <if>p1 p2 p3 : Prop</if>
  <if>h1 : (True ∨ p1) ∧ (p2 → p2)</if>
  <then>⊢ p3</then>
</state>
<tactic>have h2 := h1.left</tactic>
<state id="2">
  <if>p1 p2 p3 : Prop</if>
  <if>h1 : (True ∨ p1) ∧ (p2 → p2)</if>
  <if>h2 : True ∨ p1</if>
  <then>⊢ p3</then>
</state>
<tactic>have h3 := h1.right</tactic>
<state id="3">
  <if>p1 p2 p3 : Prop</if>
  <if>h1 : (True ∨ p1) ∧ (p2 → p2)</if>
  <if>h2 : True ∨ p1</if>
  <if>h3 : p2 → p2</if>
  <then>⊢ p3</then>
</state>
<tactic>cases h2</tactic>
<state id="4">
  <if>case inl</if>
  <if>p1 p2 p3 : Prop</if>
  <if>h1 : (True ∨ p1) ∧ (p2 → p2)</if>
  <if>h3 : p2 → p2</if>
  <if>h✝ : True</if>
  <then>⊢ p3</then>
</state>

<!-- truncated -->

<failure />
\end{verbatim}
    \end{minipage}
    \begin{minipage}[t]{0.5\linewidth}
        {\footnotesize\itshape Model output}
        \begin{verbatim}
<state id="0">
  <if>p1 p2 p3 : Prop</if>
  <then>⊢ (True ∨ p1) ∧ (p2 → p2) → p3</then>
</state>
<tactic>intro h1</tactic>
<state id="1">
  <if>p1 p2 p3 : Prop</if>
  <if>h1 : (True ∨ p1) ∧ (p2 → p2)</if>
  <then>⊢ p3</then>
</state>
<tactic>have h2 := h1.left</tactic>
<state id="2">
  <if>p1 p2 p3 : Prop</if>
  <if>h1 : (True ∨ p1) ∧ (p2 → p2)</if>
  <if>h2 : True ∨ p1</if>
  <then>⊢ p3</then>
</state>
<tactic>have h3 := h1.right</tactic>
<state id="3">
  <if>p1 p2 p3 : Prop</if>
  <if>h1 : (True ∨ p1) ∧ (p2 → p2)</if>
  <if>h2 : True ∨ p1</if>
  <if>h3 : p2 → p2</if>
  <then>⊢ p3</then>
</state>
<tactic>apply False.elim h2</tactic>   <!-- wrong tactic chosen -->
<state id="4">
  <complete />
</state>
<success />
        \end{verbatim}
    \end{minipage}
\end{minipage}

\clearpage
\subsection{\textsc{Imply} split: example outputs}

Example of a successful output on the \textsc{Imply} split. As before, the model output is a token-for-token exact match to the ground truth.

\begin{minipage}[]{\linewidth}
    \scriptsize
    \begin{minipage}[t]{0.45\linewidth}
        {\footnotesize\itshape Ground truth}
        \begin{verbatim}
<state id="0">
  <if>p1 p2 p3 : Prop</if>
  <then>⊢ p1 → ((False → True) → p3) → 
          p1 → p1 → p1</then>
</state>
<tactic>intro h1</tactic>
<state id="1">
  <if>p1 p2 p3 : Prop</if>
  <if>h1 : p1</if>
  <then>⊢ ((False → True) → p3) → p1 → p1 → p1</then>
</state>
<tactic>intro h2</tactic>
<state id="2">
  <if>p1 p2 p3 : Prop</if>
  <if>h1 : p1</if>
  <if>h2 : (False → True) → p3</if>
  <then>⊢ p1 → p1 → p1</then>
</state>
<tactic>intro h3</tactic>
<state id="3">
  <if>p1 p2 p3 : Prop</if>
  <if>h1 : p1</if>
  <if>h2 : (False → True) → p3</if>
  <if>h3 : p1</if>
  <then>⊢ p1 → p1</then>
</state>
<tactic>intro h4</tactic>
<state id="4">
  <if>p1 p2 p3 : Prop</if>
  <if>h1 : p1</if>
  <if>h2 : (False → True) → p3</if>
  <if>h3 h4 : p1</if>
  <then>⊢ p1</then>
</state>
<tactic>exact h1</tactic>
<state id="5">
  <complete />
</state>
<success />
        \end{verbatim}
    \end{minipage}
    \begin{minipage}[t]{0.5\linewidth}
        {\footnotesize\itshape Model output}
        \begin{verbatim}
<state id="0">
  <if>p1 p2 p3 : Prop</if>
  <then>⊢ p1 → ((False → True) → p3) → 
          p1 → p1 → p1</then>
</state>
<tactic>intro h1</tactic>
<state id="1">
  <if>p1 p2 p3 : Prop</if>
  <if>h1 : p1</if>
  <then>⊢ ((False → True) → p3) → p1 → p1 → p1</then>
</state>
<tactic>intro h2</tactic>
<state id="2">
  <if>p1 p2 p3 : Prop</if>
  <if>h1 : p1</if>
  <if>h2 : (False → True) → p3</if>
  <then>⊢ p1 → p1 → p1</then>
</state>
<tactic>intro h3</tactic>
<state id="3">
  <if>p1 p2 p3 : Prop</if>
  <if>h1 : p1</if>
  <if>h2 : (False → True) → p3</if>
  <if>h3 : p1</if>
  <then>⊢ p1 → p1</then>
</state>
<tactic>intro h4</tactic>
<state id="4">
  <if>p1 p2 p3 : Prop</if>
  <if>h1 : p1</if>
  <if>h2 : (False → True) → p3</if>
  <if>h3 h4 : p1</if>
  <then>⊢ p1</then>
</state>
<tactic>exact h1</tactic>
<state id="5">
  <complete />
</state>
<success />
        \end{verbatim}
    \end{minipage}
\end{minipage}

\clearpage
Example of an unsuccessful output on the \textsc{Imply} split. The model chooses an erroneous tactic, which leads it to produce syntactically well-formed but meaningless states that ultimately terminate prematurely with the wrong classification.

\begin{minipage}[]{\linewidth}
    \scriptsize
    \begin{minipage}[t]{0.45\linewidth}
        {\footnotesize\itshape Ground truth}
        \begin{verbatim}
<state id="0">
  <if>p1 p2 p3 : Prop</if>
  <then>⊢ (True → False) → (False → True) → 
          (p3 → False) → False</then>
</state>
<tactic>intro h1</tactic>
<state id="1">
  <if>p1 p2 p3 : Prop</if>
  <if>h1 : True → False</if>
  <then>⊢ (False → True) → (p3 → False) → False</then>
</state>
<tactic>intro h2</tactic>
<state id="2">
  <if>p1 p2 p3 : Prop</if>
  <if>h1 : True → False</if>
  <if>h2 : False → True</if>
  <then>⊢ (p3 → False) → False</then>
</state>
<tactic>intro h3</tactic>
<state id="3">
  <if>p1 p2 p3 : Prop</if>
  <if>h1 : True → False</if>
  <if>h2 : False → True</if>
  <if>h3 : p3 → False</if>
  <then>⊢ False</then>
</state>
<tactic>have h4 : True := by</tactic>
<state id="4">
  <if>p1 p2 p3 : Prop</if>
  <if>h1 : True → False</if>
  <if>h2 : False → True</if>
  <if>h3 : p3 → False</if>
  <then>⊢ True</then>
</state>

<!-- truncated -->

<success />
        \end{verbatim}
    \end{minipage}
    \begin{minipage}[t]{0.5\linewidth}
        {\footnotesize\itshape Model output}
        \begin{verbatim}
<state id="0">
  <if>p1 p2 p3 : Prop</if>
  <then>⊢ (True → False) → (False → True) → 
          (p3 → False) → False</then>
</state>
<tactic>intro h1</tactic>
<state id="1">
  <if>p1 p2 p3 : Prop</if>
  <if>h1 : True → False</if>
  <then>⊢ (False → True) → (p3 → False) → False</then>
</state>
<tactic>intro h2</tactic>
<state id="2">
  <if>p1 p2 p3 : Prop</if>
  <if>h1 : True → False</if>
  <if>h2 : False → True</if>
  <then>⊢ (p3 → False) → False</then>
</state>
<tactic>intro h3</tactic>
<state id="3">
  <if>p1 p2 p3 : Prop</if>
  <if>h1 : True → False</if>
  <if>h2 : False → True</if>
  <if>h3 : p3 → False</if>
  <then>⊢ False</then>
</state>
<tactic>have h4 : p3 := by</tactic>    <!-- wrong tactic chosen -->
<state id="4">
  <if>p1 p2 p3 : Prop</if>
  <if>h1 : True → False</if>
  <if>h2 : False → True</if>
  <if>h3 : p3 → False</if>
  <then>⊢ p3</then>
</state>
<state id="5">
  <complete />
</state>
<tactic>have h5 := h3 h4</tactic>
<state id="6">
  <complete />
</state>
<backtrack to="3" />
<state id="3">
  <if>p1 p2 p3 : Prop</if>
  <if>h1 : True → False</if>
  <if>h2 : False → True</if>
  <if>h3 : p3 → False</if>
  <then>⊢ False</then>
</state>
<failure />
        \end{verbatim}
    \end{minipage}
\end{minipage}

\clearpage
\subsection{\textsc{Or} split: example output}

Example of an successful output on the \textsc{Or} split. Despite being substantially longer than any training example and numbering several thousand tokens, the model attains a token-for-token perfect match to the ground truth. The output has been condensed slightly to improve readability.

\begin{minipage}[]{\linewidth}
    \scriptsize
    \begin{minipage}[t]{0.45\linewidth}
        {\footnotesize\itshape Ground truth}
        \begin{verbatim}
<state id="0">
  <then>p88 → (((p10 ∨ p57 ∨ ... ∨ p89) ∨ p15</then>
</state>
<tactic>intro h1</tactic>
<state id="1">
  <if>h1 : p88</if>
  <then>(((p10 ∨ p57 ∨ ... ∨ p89) ∨ p15</then>
</state>
<tactic>apply Or.inl</tactic>
<state id="2">
  <if>h1 : p88</if>
  <then>((p10 ∨ p57 ∨ ... ∨ p89</then>
</state>
<tactic>apply Or.inl</tactic>
<state id="3">
  <if>h1 : p88</if>
  <then>(p10 ∨ p57 ∨ ... ∨ p87</then>
</state>

<!-- truncated -->

<state id="22">
  <if>h1 : p88</if>
  <then>p88</then>
</state>
<tactic>exact h1</tactic>
<state id="23">
  <complete />
</state>
<success />
\end{verbatim}
    \end{minipage}
    \begin{minipage}[t]{0.5\linewidth}
        {\footnotesize\itshape Model output}
        \begin{verbatim}
<state id="0">
  <then>p88 → (((p10 ∨ p57 ∨ ... ∨ p89) ∨ p15</then>
</state>
<tactic>intro h1</tactic>
<state id="1">
  <if>h1 : p88</if>
  <then>(((p10 ∨ p57 ∨ ... ∨ p89) ∨ p15</then>
</state>
<tactic>apply Or.inl</tactic>
<state id="2">
  <if>h1 : p88</if>
  <then>((p10 ∨ p57 ∨ ... ∨ p89</then>
</state>
<tactic>apply Or.inl</tactic>
<state id="3">
  <if>h1 : p88</if>
  <then>(p10 ∨ p57 ∨ ... ∨ p87</then>
</state>

<!-- truncated -->

<state id="22">
  <if>h1 : p88</if>
  <then>p88</then>
</state>
<tactic>exact h1</tactic>
<state id="23">
  <complete />
</state>
<success />
        \end{verbatim}
    \end{minipage}
\end{minipage}

\clearpage
\subsection{\textsc{PHP} split: example output}

Example of an unsuccessful, non-terminating output on the \textsc{PHP} split. The model matches the ground truth for several thousand tokens before erroneously labeling a state with id 100, rather than 59. This appears to lead the model down an alternative computation path that does not terminate within the allotted max context length of 32 thousand tokens, despite the ground truth terminating well within this range. The output has been condensed slightly to improve readability.

\begin{minipage}[]{\linewidth}
    \scriptsize
    \begin{minipage}[t]{0.45\linewidth}
        {\footnotesize\itshape Ground truth}
        \begin{verbatim}
<state id="0">
  <then>p02 ∧ p11 ∧ (p20 ∨ p22) ∧ p32 → 
  p00 ∧ p10 ∨ p01 ∧ p11 ... ∨ p31 ∧ p21</then>
</state>
<tactic>intro h1</tactic>
<state id="1">
  <if>h1 : p02 ∧ p11 ∧ (p20 ∨ p22) ∧ p32</if>
  <then>p00 ∧ p10 ∨ p01 ∧ p11 ... p31 ∧ p21</then>
</state>
<tactic>have h2 := h1.left</tactic>
<state id="2">
  <if>h1 : p02 ∧ p11 ∧ (p20 ∨ p22) ∧ p32</if>
  <if>h2 : p02</if>
  <then>p00 ∧ p10 ∨ p01 ∧ p11 ... p31 ∧ p21</then>
</state>

<!-- truncated -->

<state id="59">
  <if>h1 : p02 ∧ p11 ∧ (p20 ∨ p22) ∧ p32</if>
  <if>h2 : p02</if>
  <if>h3 : p11 ∧ (p20 ∨ p22) ∧ p32</if>
  <if>h4 : p11</if>
  <if>h5 : (p20 ∨ p22) ∧ p32</if>
  <if>h7 : p32</if>
  <if>h8 : p20</if>
  <then>p00 ∧ p10 ∨ p01 ∧ p11 ... p31 ∧ p21</then>
</state>

<!-- truncated -->

<failure />
\end{verbatim}
    \end{minipage}
    \begin{minipage}[t]{0.5\linewidth}
        {\footnotesize\itshape Model output}
        \begin{verbatim}
<state id="0">
  <then>p02 ∧ p11 ∧ (p20 ∨ p22) ∧ p32 → 
  p00 ∧ p10 ∨ p01 ∧ p11 ... ∨ p31 ∧ p21</then>
</state>
<tactic>intro h1</tactic>
<state id="1">
  <if>h1 : p02 ∧ p11 ∧ (p20 ∨ p22) ∧ p32</if>
  <then>p00 ∧ p10 ∨ p01 ∧ p11 ... p31 ∧ p21</then>
</state>
<tactic>have h2 := h1.left</tactic>
<state id="2">
  <if>h1 : p02 ∧ p11 ∧ (p20 ∨ p22) ∧ p32</if>
  <if>h2 : p02</if>
  <then>p00 ∧ p10 ∨ p01 ∧ p11 ... p31 ∧ p21</then>
</state>

<!-- truncated -->

<state id="100">   <!-- incorrect state number -->
  <if>h1 : p02 ∧ p11 ∧ (p20 ∨ p22) ∧ p32</if>
  <if>h2 : p02</if>
  <if>h3 : p11 ∧ (p20 ∨ p22) ∧ p32</if>
  <if>h4 : p11</if>
  <if>h5 : (p20 ∨ p22) ∧ p32</if>
  <if>h7 : p32</if>
  <if>h8 : p20</if>
  <then>p00 ∧ p10 ∨ p01 ∧ p11 ... ∨ p31 ∧ p21</then>
</state>

<!-- does not terminate -->

        \end{verbatim}
    \end{minipage}
\end{minipage}

\end{document}